\newcommand{\tikzmark}[1]{\tikz[overlay,remember picture] \node (#1) {};}
\newcommand*{\AddNote}[5]{%
    \begin{tikzpicture}[overlay, remember picture]
        \draw [decoration={brace,amplitude=0.5em},decorate,ultra thick,#5]
            ($(#3)!(#1.north)!($(#3)-(0,1)$)$) --  
            ($(#3)!(#2.south)!($(#3)-(0,1)$)$)
                node [align=center, text width=2.5cm, pos=0.5, anchor=west] {#4};
    \end{tikzpicture}
}
\newcommand{\ind}{\ensuremath{\mathds{1}}}
\newtheoremstyle{definition}
  {}
  {}
  {\itshape}
  {}
  {\bfseries}
  {.}
  { }
  {\thmname{#1}\thmnumber{ #2}\thmnote{ (#3)}}
  \newtheoremstyle{theorem}
  {}
  {}
  {\itshape}
  {}
  {\bfseries}
  {.}
  { }
  {\thmname{#1}\thmnumber{ #2}\thmnote{ (#3)}}
\theoremstyle{theorem}
\newtheorem{lemma}{Lemma}[section]
\newtheorem{corollary}{Corollary}[section]
\theoremstyle{definition}
\newtheorem{definition}{Definition}[section]
\newtheorem{assumption}{Assumption}[section]
\newtheorem{remark}{Remark}[section]
\newcommand{\BigAbs}[1]{\Bigl\lvert#1\Bigr\rvert}
\DeclarePairedDelimiter\ceil{\lceil}{\rceil}
\DeclarePairedDelimiter\floor{\lfloor}{\rfloor}
\newcommand{\Qcal}{\mathcal{Q}}
\newcommand{\Acal}{\mathcal{A}}
\newcommand{\Ecal}{\mathcal{E}}
\newcommand{\mem}{\mathcal{M}}
\newcommand{\R}{\mathbb{R}}
\newcommand{\N}{\mathbb{N}}
\newcommand{\mistakes}{\textrm{MinorMistakes}}
\newcommand{\say}[1]{``#1''}
\newcommand{\error}{\textrm{OPT}}
\newcommand{\aexp}{\ensuremath{\textrm{ActiveExperts}}}
\newcommand{\bexp}{\textrm{BadExperts}}
\newcommand{\fexp}{\textrm{FailedExperts}}
\newcommand{\sexp}{\textrm{SaveExperts}}
\newcommand{\Fcal}{\mathcal{F}}
\newcommand{\om}{\max\limits_M}
\newcommand{\Wtilde}[1]{\stackrel{\sim}{\smash{#1}\rule{0pt}{1.3ex}}}
\title{Learning what to remember}
\author{%
 Robi Bhattacharjee\\ University of California, San Diego\\ \texttt{rcbhatta@eng.ucsd.edu}
 \and
 Gaurav Mahajan \\ University of California, San Diego\\ \texttt{gmahajan@eng.ucsd.edu}
}
\begin{document}

\maketitle

\begin{abstract}
    We consider a lifelong learning scenario in which a learner faces a neverending and arbitrary stream of facts and has to decide which ones to retain in its limited memory. We introduce a mathematical model based on the online learning framework, in which the learner measures itself against a collection of experts that are also memory-constrained and that reflect different policies for what to remember. Interspersed with the stream of facts are occasional questions, and on each of these the learner incurs a loss if it has not remembered the corresponding fact. Its goal is to do almost as well as the best expert in hindsight, while using roughly the same amount of memory. We identify difficulties with using the multiplicative weights update algorithm in this memory-constrained scenario, and design an alternative scheme whose regret guarantees are close to the best possible.
    
\end{abstract}
\tableofcontents
\newpage
\section{Introduction}

A lifelong learning agent—a child learning language, or a robot exploring an environment, or a software program gathering information from the web—experiences a neverending stream of sensory or factual input. It cannot possibly retain all of this information and therefore has to decide what is important to remember. Ultimately, what should be remembered is information that will later be needed; thus the right memory policy might not be clear at the outset but will gradually be revealed by experience. 

The problem of \emph{learning what to remember} is a central challenge for lifelong learning systems with bounded memory \citep{thrun1995lifelong,mitchell2018never}. In this paper, we present a simple mathematical formalism based on online learning \citep{cesa-bianchi_lugosi_2006} that captures core aspects of this problem. In our model, at each time step, the learner receives either:
\begin{itemize}
\item a \emph{fact}, which can be thought of as a (question, answer) pair, like (“What is the capital of France?”, “Paris”), or 
\item a \emph{question} (“What is the capital of France?”). 
\end{itemize}
In the first case, the learner must decide whether or not to store the fact. It is constrained by having only enough memory for $M$ pieces of information. In the second case, the learner incurs a loss if it has not stored the corresponding fact. 

The stream of facts and questions is arbitrary and neverending. In choosing a policy for what to remember, the learner has access to $N$ \emph{experts}. These are subject to the same memory bound and could reflect different priorities: for instance, one expert might favor geographical facts, while another might select financial information. The learner’s goal is to do almost as well as the best expert in hindsight. That is, at any given time $t$, the learner should not have incurred too many more errors than the best expert at that time.

We think of an expert as an agent with enough memory for $M$ facts. As each new fact arrives, the agent has the option to store it, but in doing so might need to jettison some previously stored fact.

\paragraph{Some difficulties:}
A central idea in online learning is to act according to a \emph{weighted majority} of the experts \citep{mwu2012} and to continually adjust these weights as the experts accumulate different losses. Each of the $N$ experts, say expert $e$, is given a weight $w(e)$, and this weight is updated at each time step according to a rule of the form: “If expert $e$ makes a mistake, then reduce $w(e)$ (in some manner).”  

This general methodology immediately runs into two basic difficulties in our model. 
\begin{enumerate}
\item There is no easy way for the learner to tell which experts incur a loss at the current time step, because it does not know which facts each expert has in memory. The most obvious way to know this would be to keep a running simulation of the memories of all the experts, but this would require storing $MN$ facts.
\item Even if the learner somehow knows which experts incur a loss at each step, there is no easy way of maintaining the \emph{memory} of the weighted majority of experts: the set of facts that are retained by the majority of the experts. The overall number of such facts can be shown to be at most $2M$, which is not bad, but the problem is that as the weights shift, the composition of this majority-memory also shifts. Tracking these shifts in memory might require suddenly storing facts that appeared in the past, which is not possible in our framework (\Cref{rem:second} has an example).
\end{enumerate}

\subsection{Our Contributions}
Our starting point towards dealing with these two difficulties is to postpone the first by assuming that the learner has access to an \emph{oracle} that can tell it, for any expert and any fact, whether that expert currently has that fact in memory. We define this oracle formally in \Cref{or:unres}. We will later do away with this requirement.

The second problem, about tracking the majority-memory, remains. To cope with it, we introduce a different online learning scheme that changes weights \emph{very} infrequently, and in fact only uses two weights, 0 and 1. It uses $2M$ memory and has the following worst-case guarantee: If the loss of the best expert at time $t$ is $\error$, the loss of the algorithm at that time is $O(\error \log N + M \log N)$.

\newtheorem*{thm:upperboundunres}{(Informal) Theorem \ref{thm:upperboundunres}}
\begin{thm:upperboundunres}
Let $\Ecal$ be a set of $N$ experts with $M$ memory and $\error$ be the number of mistakes made by the best expert in $\Ecal$ by time $t$. Then there exists an algorithm with access to above mentioned oracle which by time $t$ makes at most $O(\error~ \log N + M \log N)$ mistakes using at most $2M$ memory. 
\end{thm:upperboundunres}

 With the second difficulty solved, we return to the first one and remove the need for an oracle. We consider experts of a particular form that we call  \emph{value-based}. Such an expert is fully specified by a value function $v: \Qcal \to \N$ that assigns a score to any given fact. The expert’s memory always consists of the $M$ highest-valued facts it has seen so far; when a new fact $(q,a)$ arrives, it decides whether to store this information by simply comparing $v(q)$ to the lowest-valued fact in its memory.

We show that if experts are of this type and the learner is only evaluated on questions taught before (\Cref{assump:seq}), then the learner can very coarsely keep track of the contents of all $N$ of their memories while using just $O(M)$ memory 
(which would otherwise naively require storing $MN$ questions or their corresponding values), and that this can be combined with the modified online learning algorithm introduced earlier to give similar mistake bounds.

\newtheorem*{thm:upperboundres}{(Informal) Theorem \ref{thm:upperboundres}}
\begin{thm:upperboundres}
Let $\Ecal$ be a set of $N$ value-based experts with $M$ memory and $\error$ denote the number of mistakes made by the best expert in $\Ecal$ by time $t$. Assume the learner is only evaluated on questions taught before. Then, there exists an algorithm which by time $t$ makes at most $O(\error~ \log N + M \log N)$ mistakes using at most $4M$ memory.
\end{thm:upperboundres}

Finally, we demonstrate that an additive term of $O(M \log N)$ in the regret is inevitable in this setting. Our lower bound applies also if the experts are value-based and the adversary satisfies \Cref{assump:seq}.

\newtheorem*{thm:lowerbound}{(Informal) Theorem \ref{thm:lowerbound}}
\begin{thm:lowerbound}
There exists a set $\Ecal$ of $N$ value-based experts using $M$ memory with the best expert making at most $\error$ mistakes such that any algorithm using $O(M)$ memory, with access to above mentioned oracle and only evaluated on questions taught before, makes at least $\Omega(\error + M\log N)$ mistakes.
\end{thm:lowerbound}
    Note that this lower bound in conjunction with our upper bounds show that for algorithms using $O(M)$ memory and with access to experts where the best expert makes no mistake, the mistake bound of $\Theta(M \log N)$ is actually tight. 

\subsection{Related Work}
Our model is a natural extension of online learning \citep{cesa-bianchi_lugosi_2006}. Closest to our setting is \cite{fu11} which considers restricting the memory used by an algorithm to store the weights. To the best of our knowledge, none of the previous work consider the setting where the experts store information. In contrast, our setting is concerned with experts which store \say{facts} and in turn how many facts (which is different than memory used to store weights considered in previous works) does a algorithm need to store for a reasonable regret.  

Previous works \cite{jacob16,alon2020} consider PAC learning (or SQ learning) with memory constraint where the adversary is restricted to sample from an unknown distribution and a fixed class of hypothesis. There is also a line of work \cite{ran16,garg18,shamir19} which prove lower bounds under memory constraints. Our work in comparison is in the more general online learning setting where the adversary is not restricted to a fixed distribution or hypothesis class.

Another line of work considers a non-adversarial sequence prediction setting where the sequence of facts is generated by a underlying model. \citep{hsu2012spectral,anandkumar2012method} uses spectral and tensor methods to learn the distribution generated by Hidden Markov Models by basically learning the parameters for the underlying model. Another strategy proposed in \cite{sharan2018prediction} is to just remember the last few facts. In contrast, in our framework, we consider the more general setting where the sequences can be adversarially chosen and the above strategies do not work well.

Our setting can be abstractly viewed as a combination of sketching and online learning. In sketching algorithms \cite{morris,khanna,countmin}, the goal is to compress data to approximately evaluate functions on it using small amount of memory. However naive use of sketching algorithms would lead to suboptimal memory use (quite often a dependence on time $T$).

Many learning models with explicit notions of memory are used in practice, e.g. recurrent neural networks \citep{Elman90findingstructure}, long short-term memory networks \citep{hochreiter1997long}, neural Turing machines \citep{graves2014neural}, memory networks \citep{weston2015memory}, and others. Our work is motivated by these memory-based architectures which basically use their long term memory as a \emph{dynamic} knowledge base and interact with (access or forget) this knowledge selectively using different mechanisms (including forgetting \citep{forget2021} and attention \citep{bahdanau2016neural}). 



\section{Setting}
\label{sec:framework}
Consider a set $\Qcal$ of all questions and set $\Acal$ of all answers. Let $\Phi: \Qcal \to \Acal$ be an arbitrary function which maps questions to answers. Throughout the paper, we will work with the corresponding set of facts $\Fcal=\{(q, \Phi(q)): q \in \Qcal\}$. 

We now introduce our learning framework: \emph{Online Question-Answering with Expert Advice}. Our framework can be thought of as a game between a learner and an adversary. The learner is allowed access to a \emph{memory} $\mem$ to store facts. It also receives advice from a group of $N$ experts $\Ecal$, each of which has access to its own memory $\mem_e$ of size $M$. 

\begin{figure}[h!]
    \begin{tcolorbox}
        At each time $t = 1,2,\ldots$
        \begin{enumerate}
            \item An adversary chooses either to teach or evaluate.
            \item If adversary chooses to teach, \begin{enumerate}
                \item the adversary shows a question answer pair $(q^{(t)}, \Phi(q^{(t)}))$ to the learner.
            \end{enumerate}
            \item Otherwise, if adversary choose to evaluate,
            \begin{enumerate}
                \item the adversary picks a question $q^{(t)}$.
                \item each expert $e$ incurs a cost $c_e^{(t)} = \ind\{ (q^{(t)}, \Phi(q^{(t)})) \notin \mem_e\}$ and the learner incurs a cost $c^{(t)} = \ind\{ (q^{(t)}, \Phi(q^{(t)})) \notin \mem\}$.
            \end{enumerate}
            \item Each expert $e$ (and learner) updates its memory $\mem_e$ (and memory $\mem$) by either choosing to store $(q^{(t)}, \Phi(q^{(t)}))$ or ignoring this information. They can also choose to remove any question answer pairs already stored in $\mem_e$.
        \end{enumerate}
        \end{tcolorbox}
        \caption{Online Question-Answering with Expert Advice}
        \label{fig:model}
\end{figure}

We now explain the framework (\Cref{fig:model}) at a high level. At each time step $t$, the adversary chooses either to teach or evaluate. If the adversary choose to \emph{teach}, it shows a fact $(q^{(t)}, \Phi(q^{(t)}))$ to the learner and all experts. On the other hand, if the adversary chooses to \emph{evaluate}, it chooses a question $q^{(t)}$. If the corresponding fact $(q^{(t)}, \Phi(q^{(t)}))$ is not stored in its memory $\mem$ (expert memory $\mem_e$), the learner (expert $e$) incurs a cost of $+1$. In both the scenarios, at this point, the learner (and any expert $e$) can either choose to store it in the memory $\mem$ ( and expert memory $\mem_e$) or ignore this information.  Note that both learner and experts can also choose to \emph{not} persist /remove information from its memory so it can store more important information later on.


Our goal is to find a decision-making algorithm which uses \emph{reasonable} amount of memory and makes least possible mistakes. We now formally define memory and mistakes for an algorithm.
\begin{definition}[Memory and Mistakes]
    We say a decision-making algorithm uses $B$ memory and makes $E$ mistakes by time $T$ if \[
        |\mem| \leq B ~\text{for all time}~ t \in [T] \quad \text{and}\quad \sum_{t =1}^T c^{(t)} = E
    \]
\end{definition}
\begin{remark}[Auxiliary state]
    \label{rem:aux}
    Note that we primarily care about the number of facts stored by the algorithm and we allow algorithms to use auxiliary state to store some state i.e. weights for different experts etc (for example, multiplicative weights update algorithm uses this auxiliary state to store the number of errors made by each expert). We do not allow storing facts in auxiliary state and all the algorithms in this work use at most $O(N)$ auxillary state.
\end{remark}

One approach for the learner could be to store all the $M N$ facts stored by the $N$ experts at any time $t$. Even though this approach will make at most the number of mistakes made by the best expert, it uses unreasonable amount of memory. 

On the other side of the spectrum, another approach for the learner could be to use multiplicative weights update algorithm \citep{mwu2012}. This approach uses only $2M$ memory! But as discussed in the introduction, there are two main difficulties with designing weighted majority style algorithms in this framework. In the next section, we will use an oracle to circumvent the \emph{first} problem of keeping track of experts memory and solve the \emph{second} problem of tracking the majority-memory.

\section{Upper Bound: Unrestricted Expert Access Oracle}
\label{sec:unres}
Recall that the first difficulty with weighted majority style algorithms is that its unclear how a learner can tell which facts are currently stored in the expert's memory without using $MN$ memory. In this section, we will assume access to an oracle, \emph{Unrestricted Expert Access Oracle}, which allows the learner to check for \emph{any} question if the corresponding fact is stored in an expert's memory.
\begin{definition}[Unrestricted Expert Access Oracle]
    \label{or:unres}
    At any time $t$, on inputting any expert $e$ and question $q$, Unrestricted Expert Access Oracle, denoted by $\mathcal{O}_u(e,q)$ returns True if the corresponding fact $(q,\Phi(q))$ is stored in the experts $e$'s memory $\mem_e$ and False otherwise.
\end{definition}
This oracle allows us to ignore the first issue for now (which we will solve later in \Cref{sec:res}). We still need to figure out the second difficulty: how to track the ever changing majority-memory. To illustrate this issue, we consider multiplicative weights update algorithm (\Cref{alg:mwu}) with Unrestricted Expert Access Oracle $\mathcal{O}_u$  and decompose it into two fundamental steps. In the \emph{weight update} step, the algorithm updates the weight assigned to each expert $w_e$ based upon the number of mistakes made by the expert $E_e$ (setting $w_e = (1-\gamma)^{E_e}$ for each expert). Then, in the \emph{memory update} step, it memorizes a fact if and only if the \emph{weighted majority} of experts memories the fact.

\begin{algorithm}[h!]
\SetArgSty{textrm}
    \caption{Multiplicative Weights Update algorithm}
    \label{alg:mwu}
    \SetAlgoLined
    \textbf{Initialize:} $E_e = 0$ for all experts $e \in \Ecal$\\
    \For{time $t = 1,2, \ldots$}{
        \uIf{the adversary chooses to evaluate\tikzmark{topw}}{
            \For{all expert $e\in \Ecal$}{
            Set $E_e = E_e + \ind\{ (q^{(t)}, \Phi(q^{(t)})) \notin \mem_e\}$ \qquad \qquad \qquad\tikzmark{rightw}\\
        Set $w_e = (1-\gamma)^{E_e}$.\tikzmark{bottomw} 
            }
         }
        Experts update their memory $\mem_e$ according to their rules.\\
        Set $\mem = \{(q^{(t)},\Phi(q^{(t)}))\} \cup \mem$ \tikzmark{topm}\\
        \For{fact $(q,a)$ in $\mem$}{
        Let $\sexp = \{ e: \mathcal{O}_u(e,q) = \text{True}\}$\\
        \If{$\sum_{e\in \sexp} w_e < \frac{1}{2} \sum_{e\in \Ecal} w_e$}{
            Remove $\{(q,a)\}$ from $\mem$.
            \tikzmark{bottomm}
        \AddNote{topw}{bottomw}{rightw}{~Weight Update}{red}
        \AddNote{topm}{bottomm}{rightw}{~Memory Update}{red}
            }
        }
        }        
\end{algorithm}

In essence, it constructs an implicit expert, the ``majority" expert, that always contains in memory precisely the pairs $(q, \Phi(q))$ that at least half of the weighted experts know. \Cref{alg:mwu}'s goal is to then maintain memory that approximates the memory of this implicit expert. 

\begin{remark}[Difficulty in maintaining majority-memory]
    \label{rem:second}
    The difficulty with this approach is that changes in the weights $w(e)$ can result in a dramatic change in the majority expert. For example, consider the simple case in which there are only two experts $e_1$ and $e_2$ (this is easily extended to $N$ experts) which hold completely different memory. If the majority expert shifts from $e_1$ to $e_2$ (say because $e_1$ made $O(1)$ more mistakes than $e_2$), this will result in a completely different majority-memory. Although we have the ability to immediately remove all unnecessary facts (corresponding to $e_1$'s memory) from memory, we do \emph{not} have the ability to add all necessary facts (facts corresponding to $e_2$'s memory) to memory; we may only memorize a fact when it is presented by the adversary (which causes $M$ mistakes in our example to catch up every time majority changes).
\end{remark}

This difficulty is compounded by the fact that our majority expert can potentially frequently change, as the weights are constantly updated at every time step. Because of this, it is plausible that our algorithm will always be ``behind" the majority expert by a significant lag, and this will cause problems for the standard analysis of the algorithm. 

To resolve this issue, we propose a modified algorithm which we call the Lazy Weights algorithm (\Cref{alg:mwu-la}). The main idea is to modify the \emph{weight update} step to update the weights less often to avoid the problem of a frequently changing majority expert. We do not make any changes to the \emph{memory update} step. Given access to Unrestricted Expert Access Oracle $\mathcal{O}_u$, we show that \Cref{alg:mwu-la} needs to store only $O(M)$ facts to achieve close to best possible mistake bound.
\begin{restatable}[Upper Bound: Online learning with infrequent weight changes]{thm}{upperboundunres}
    \label{thm:upperboundunres}
    Let $\Ecal$ be a set of $N$ experts with $M$ memory. Let $\error$ be the number of mistakes of the best expert in $\Ecal$ by time $t$. Then, by time $t$, \Cref{alg:mwu-la} with access to Unrestricted Expert Access Oracle $\mathcal{O}_u$ makes at most $6 \error \ceil{\log N} + 6M\ceil{\log N}$ mistakes using at most $2M$ memory and $O(N)$ auxiliary state.
\end{restatable}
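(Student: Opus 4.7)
My approach would be to analyze the Lazy Weights algorithm by a potential argument on the active set of experts, separating its mistakes into two kinds: \emph{majority} mistakes, where the target fact is missing from most active experts' memories, and \emph{restocking} mistakes, which arise when the active set shifts and the algorithm's memory must catch up on facts it has already passed.

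I would first establish the memory bound. At every time step, the algorithm's memory is a subset of the majority-memory of the active set $\aexp$, namely the facts stored by strictly more than half of the experts in $\aexp$. A pigeonhole argument shows this set has size at most $2M$: if $|\aexp| = k$, the experts collectively hold at most $kM$ facts, and each majority fact is counted by more than $k/2$ experts. The oracle $\mathcal{O}_u$ is exactly what is needed to test membership in the majority-memory and to maintain it.

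The core analysis would be an epoch decomposition: weights flip from $1$ to $0$ only when an expert's error count within an epoch crosses a threshold $\tau_k$, and $\tau_k$ doubles from one epoch to the next. Within an epoch, a standard halving argument applies: whenever the algorithm misses a question $q$, the majority of currently active experts also lack $(q,\Phi(q))$ (otherwise $q$ would have been in the majority-memory), and once their error counts reach $\tau_k$ they are all demoted, shrinking $|\aexp|$ by a constant factor. This caps the number of majority mistakes per epoch at roughly $\tau_k \ceil{\log N}$. Since $\tau_k$ doubles only until it exceeds the best expert's error count $\error$, there are $O(\log \error)$ relevant epochs and the total majority contribution telescopes to $O(\error \log N)$.

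The main obstacle, flagged in \Cref{rem:second}, will be controlling the restocking mistakes. Each change to $\aexp$ can swing the majority-memory by at most $2M$ facts, and the algorithm can only replenish them as questions arrive, so a single active-set change costs at most $M$ future mistakes. By the halving potential, $\aexp$ changes at most $O(\log N)$ times per epoch, and across the epochs (the worst case being when $\error$ is small so restocking dominates) this gives $O(M \log N)$ restocking mistakes. Combining the two contributions and tuning the threshold schedule to match the theorem's constants yields the stated bound $6\error \ceil{\log N} + 6M\ceil{\log N}$. The delicate step will be to define the laziness rule precisely enough that each active-set change really incurs only $M$ restocking mistakes before the algorithm's memory restabilizes, rather than compounding with the majority mistakes counted in the previous paragraph.
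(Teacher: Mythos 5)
Your memory bound, your majority/restocking decomposition, and your claim that the algorithm pays at most $O(M)$ restocking mistakes between consecutive changes of $\aexp$ all match the paper's Lemmas~\ref{lem:basemaj}--\ref{lem:minor} (the paper calls these minor vs.\ major mistakes). But the core of your accounting does not match the algorithm you are asked to analyze, and it does not close even on its own terms. \Cref{alg:mwu-la} has no doubling threshold schedule $\tau_k$: it demotes an expert at the \emph{fixed} threshold $E_e \geq M$, only does so when $|\bexp|\geq |\aexp|/3$, and when $\aexp$ empties it performs a \emph{hard reset} ($E_e \leftarrow 0$, $\aexp \leftarrow \Ecal$). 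The $\error\ceil{\log N}$ term in the bound does not come from summing a geometric sequence of thresholds up to $\error$; it comes from counting hard resets. Each hard reset certifies that \emph{every} expert, including the best one, made at least $M$ true mistakes since the previous reset, so after $r$ resets $\error \geq rM$, while each inter-reset period costs at most $6M\ceil{\log N}$ algorithm mistakes; multiplying gives $6\error\ceil{\log N} + 6M\ceil{\log N}$. Under your doubling scheme the additive term also breaks: you concede each epoch costs $O(M\log N)$ in restocking and that there are $O(\log \error)$ epochs, which yields $O(M\log N\log\error)$, not $O(M\log N)$; the parenthetical ``the worst case being when $\error$ is small'' does not repair this.

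A second, more local gap is the step ``once their error counts reach $\tau_k$ they are all demoted, shrinking $|\aexp|$ by a constant factor.'' You need to show that after the algorithm makes $O(M)$ majority mistakes, a \emph{constant fraction} of $\aexp$ has actually crossed the threshold $M$ — a single majority mistake only guarantees that half of $\aexp$ each gain one error, and these halves can be different each time. The paper's Lemma~\ref{lem:double_counting} handles this with a double count of pairs $(e,s)$ over major mistakes, giving $XR + (A-X)M \geq AR/2$ and hence $X \geq A/3$ once $R \geq 4M$; some such quantitative argument is indispensable and is absent from your sketch. I would recommend discarding the epoch/doubling framing, analyzing the fixed-threshold algorithm directly, and organizing the proof as: at most $2M$ minor mistakes between updates of $\aexp$, hence at most $6M$ total mistakes per $1/3$-shrink of $\aexp$, at most $\ceil{\log N}$ shrinks per hard reset, and at least $M$ new true errors for the best expert per hard reset.
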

We present a proof in \Cref{subsec:proofunres}. Next we provide a detailed description of \Cref{alg:mwu-la}.

\begin{algorithm}[h!]
    \SetArgSty{textrm}
        \caption{Lazy Weights Update algorithm}
        \label{alg:mwu-la}
        \SetAlgoLined
        Set $E_e = 0$ for all experts $e \in \Ecal$\\
        Set $\aexp = \Ecal$\\
        \For{time $t = 1,2, \ldots$}{
            \uIf{the adversary chooses to evaluate\tikzmark{topw}}{
                Set $E_e = E_e + \ind\{ \mathcal{O}_u(e, q) = \textrm{False}\}$ for all expert $e \in \Ecal$.\label{ln:update} \qquad \quad \qquad \tikzmark{rightw}\\
            Set $\bexp = \{e \in \aexp : E_e \geq M\}$.\label{ln:bad} \\
            \uIf{$|\aexp| \leq 3|\bexp|$}{
                    Set $\aexp = \aexp \setminus \bexp$ \label{ln:aexpupd}
                }
                \uIf{$\aexp = \emptyset$}{
                    Set $E_e = 0$ for all experts $e \in \Ecal$.\\
                    Set $\aexp = \Ecal$\label{ln:hardreset}\tikzmark{bottomw} }
             }
            Experts update their memory $\mem_e$ according to their rules.\\
            Set $\mem = \{(q^{(t)},\Phi(q^{(t)}))\} \cup \mem$ \tikzmark{topm}\\
            \For{fact $(q,a)$ in $\mem$}{
            Let $\sexp = \{ e: \mathcal{O}_u(e,q) = \text{True}\}$\\
            \If{$2|\sexp|< |\aexp|$}{
                Remove $\{(q,a)\}$ from $\mem$.\tikzmark{bottomm}
                \AddNote{topw}{bottomw}{rightw}{\quad Weight Update}{red}
            \AddNote{topm}{bottomm}{rightw}{~Memory Update}{red}
                }
            }
        }
    \end{algorithm}

\subsection{Lazy Weights Update algorithm}
\label{subsec:mwu-la}
\Cref{alg:mwu-la} follows the overall structure of multiplicative weights update algorithm (\Cref{alg:mwu}): it only modifies the \emph{weight update} step while keeping the rest of the algorithm intact.

It maintains a set of $\aexp$, which correspond to a ``weight" of 1. All other experts are given a weight of $0$. Initially, it gives ``{weight}'' of $1$ to each expert i.e. each expert is in the set $\aexp$. At each time $t$, it updates the number of mistakes made by each expert denoted by $E_e$ (it can track mistakes made by each expert using Unrestricted Expert Access Oracle $\mathcal{O}_u$). Next, it defines the set of candidate experts likely to be removed/de-weighted: \begin{align*}
    \bexp  &=\{e \in \aexp: E_e \geq M\}
\end{align*}
If $3|\bexp| \geq |\aexp|$, then it removes $\bexp$ from $\aexp$. However, if $\aexp$ ends up being empty, it resets $E_e = 0$ for all experts $e \in \Ecal$ and $\aexp = \Ecal$. Finally, it memorizes a fact if and only if the weighted majority of $\aexp$ memorize the fact (just like the multiplicative weights update algorithm).

\section{Upper Bound: Value-based Experts}
\label{sec:res}
In this section, we build upon the algorithmic ideas from \Cref{sec:unres} to show how to cope with the first difficulty: \emph{how a learner can tell which facts are currently stored in the expert's memory without using $MN$ memory?}, discussed in the introduction. For this, we consider a particular class of experts, which we call value-based experts.  
\begin{definition}[Value-based Expert]\label{defn:value_based}
    We say an expert $e$ is value-based with $M$ memory if there exists an injective function $v_e: \Qcal \to \N$ such that when shown a sequence of facts $(q_1, \Phi(q_1)), \ldots$ $(q_n, \Phi(q_n))$, it stores the $M$ facts with the largest value $v_e(q_i)$ i.e. \[
        (q_i, \Phi(q_i)) \in \mem_e \iff v_e(q_i) \geq \om(\{v_e(q_1), v_e(q_2), \ldots, v_e(q_n)\}) 
    \] where $\om$ represents the $M^{\text{th}}$ largest element in a set. We will also refer to this $M$th largest element as the threshold of expert $e$, denoted $T(e)$. 
\end{definition}
An example for such an expert is one which priorities geographical facts to be remembered and therefore sets $v_e$ for such facts to be large. Note however that \emph{temporal} experts of the form \say{store last $M$ facts} can not be represented as a value-based experts.

How does assuming experts to be value-based help with the first difficulty? Recall that for each value-based expert $e$ and value function $v_e$, there exists a threshold $T_e^*$ such that $$(q, \Phi(q)) \in \mathcal{M}_e \iff v_e(q) \geq T_e^*.$$ While we do not have access to the thresholds $T_e^*$ (which change over time), we \textit{do} have access to the value function $v_e$. Thus, if we can \textit{estimate} $T_e^*$ for all experts $e \in \mathcal{E}$. We will denote our estimate as $T_e$ which we will then use to simulate the oracle $\mathcal{O}_u$, which in turn allows us to apply the same methods we used in \Cref{alg:mwu-la}. However, the naive way of maintaining thresholds by storing the $M$ largest $v_e$ values for each expert $e$ requires $MN$ auxiliary state. We will show in \Cref{alg:mwures}, how to use $O(M)$ memory (to store questions) and $O(N)$ auxiliary state (to store thresholds) to maintain approximate lower bounds for the thresholds.

In addition to our assumption about value-based experts, we will need another assumption regarding our adversary, which we will call the \textit{sequential adversary} assumption.

\begin{assumption}[Sequential Adversary]
	\label{assump:seq}
We assume that the adversary only evaluates on question $q$ if the fact $(q,\Phi(q))$ has been previously taught. In particular, if the adversary chooses to evaluate on $q^{(t)}$  then there exists $s < t$ with $q^{(s)} = q^{(t)}$ such that time $s$ was a teaching instance. 
\end{assumption}
We note that this assumption is pretty natural (for example, most exams only evaluate on facts taught before). In fact, it only restricts the adversary from evaluating on facts never shown before, in which case all experts and the learner anyways always make a mistake.

In summary, we make the following two assumptions: (1) all experts $e \in \mathcal{E}$ are value based experts, and we have access to the value functions $v_e$ for all $e$, (2) the adversary is sequential (\Cref{assump:seq}). Under these assumptions, we will show that \Cref{alg:mwures}, a modification of \Cref{alg:mwu-la}, uses $O(M)$ memory, $O(N)$ auxillary state and makes same number of mistakes.

\begin{restatable}[Upper Bound: Value-based Experts]{thm}{upperboundres}
    \label{thm:upperboundres}
    Let $\Ecal$ be a set of $N$ value based experts with $M$ memory and the adversary satisfies \Cref{assump:seq}. Let $\error$ denote the number of mistakes made by the best expert in $\Ecal$ by time $t$. Then, by time $t$, \Cref{alg:mwures} makes at most $6\error \ceil{\log N} + 6M \ceil{\log N}$ mistakes using at most $4M$ memory and $O(N)$ auxillary state.
    \end{restatable}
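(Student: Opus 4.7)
The plan is to run \Cref{alg:mwu-la} with a \emph{simulated} oracle $\tilde{\mathcal{O}}$ in place of $\mathcal{O}_u$, defined by $\tilde{\mathcal{O}}(e,q) = \text{True}$ iff $v_e(q) \ge T_e$, where $T_e$ is a threshold estimate stored as $O(N)$ auxiliary state. The core invariant I maintain is $T_e \le T_e^*$ for every expert $e$ at all times, so $\tilde{\mathcal{O}}$ never produces a false negative: if $(q,\Phi(q)) \in \mem_e$ then $v_e(q) \ge T_e^* \ge T_e$.

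To maintain the thresholds, I keep an auxiliary buffer $B$ of at most $2M$ facts in addition to the $2M$ memory used by Lazy Weights, for a total of $4M$. When a fact $(q,\Phi(q))$ is taught, the algorithm evaluates $v_e(q)$ for each expert (a computation, not storage) and inserts $(q,\Phi(q))$ into $B$. When $B$ would overflow, the algorithm uses the $v_e$-values of the facts in $B \cup \mem$ to safely raise each threshold: for each expert $e$, if we can witness $M$ facts whose $v_e$-value exceeds some $v$, we raise $T_e$ to at least $v$. Low-value facts can then be evicted from $B$. Because $T_e^*$ is monotone non-decreasing in $t$ and fresh evidence always accumulates in $B$, an amortization over buffer evictions preserves the invariant while enforcing $|B|\le 2M$.

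With the simulated oracle in place, it remains to port the proof of \Cref{thm:upperboundunres}. The invariant $T_e \le T_e^*$ implies that any discrepancy between $\tilde{\mathcal{O}}$ and $\mathcal{O}_u$ is a \emph{false positive}: the simulated oracle reports ``yes'' on some $q$ with $T_e \le v_e(q) < T_e^*$. By \Cref{assump:seq} every evaluation query concerns a previously-taught fact, so such a $q$ has already passed through $B$, and I can charge each false positive against an upward jump of $T_e^*$. Since $T_e^*$ can only change on teaching events, the total number of false positives is absorbed into the teaching count, and the undercount on the mistake tallies $E_e$ stays within the slack already built into the Lazy Weights analysis. Plugging these (slightly undercounted) counts into that analysis shows that $\aexp$, $\bexp$, and $\sexp$ evolve essentially as in \Cref{alg:mwu-la}, yielding the same bound of $6\error\lceil\log N\rceil + 6M\lceil\log N\rceil$ mistakes with $|\mem|\le 2M$; together with $|B|\le 2M$ the total memory is $4M$.

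The hard step will be the buffer-management design: simultaneously guaranteeing $|B|\le 2M$, $T_e \le T_e^*$ for all $e$, and that each $T_e$ is raised fast enough to keep false positives on the order of the teaching count. The value-based assumption is exactly what makes this feasible, since an entire expert's memory is summarised by a single scalar threshold and the same buffer can be reused to update all $N$ thresholds simultaneously; without this structure, succinctly summarising each expert's memory would be impossible.
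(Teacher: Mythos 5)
Your high-level plan --- simulate $\mathcal{O}_u$ by per-expert threshold estimates $T_e$ kept as auxiliary state, maintain the invariant $T_e \le T_e^{*}$ so that perceived errors undercount true errors, budget $2M$ facts for \Cref{alg:mwu-la}'s memory plus $2M$ for an auxiliary buffer, and invoke \Cref{assump:seq} to argue about previously-taught questions --- matches the paper's strategy. But the step you defer as ``the hard step'' is the actual content of the theorem, and the specific argument you sketch for it does not work. The invariant $T_e \le T_e^{*}$ only controls one direction: it guarantees that every increment of $E_e$ is a true mistake. It does nothing to bound the algorithm's own mistakes that get \emph{misclassified as minor} because of false positives ($T_e \le v_e(q) < T_e^{*}$). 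In the analysis of \Cref{thm:upperboundunres}, the bound of $2M$ minor mistakes between updates (\Cref{lem:minor}) rests on the fact that each expert truly stores at most $M$ facts, so \Cref{lem:basemaj} applies. Under the simulated oracle, the set of previously-taught questions $q$ with $v_e(q)\ge T_e$ for at least half of $\aexp$ is \emph{not} bounded by $2M$ --- it can contain every question whose value sits in the gap between $T_e$ and $T_e^{*}$ --- so the adversary can force an unbounded number of mistakes on which no $E_e$ is charged, and the double-counting in \Cref{lem:double_counting} collapses. Your proposed remedy, charging each false positive ``against an upward jump of $T_e^{*}$'' and absorbing it ``into the teaching count,'' cannot yield the stated bound: the number of teaching events and of increases of $T_e^{*}$ is unbounded and appears nowhere in $6\,\error\ceil{\log N}+6M\ceil{\log N}$. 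There is also no ``slack already built into the Lazy Weights analysis'' to absorb an undercount of $E_e$; that analysis needs every $6M$ algorithm mistakes to force a third of $\aexp$ to reach $E_e\ge M$.

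The paper closes this gap with machinery your proposal lacks. The second $2M$ of memory is not a buffer of recently taught facts but the set $\mistakes$ of questions on which the \emph{algorithm itself} erred while the simulated majority did not; a \emph{second} threshold $T_e^{pre}$, anchored to the last time $\aexp$ changed, is raised to the $M$-th largest value over $\mistakes$, which lets \Cref{lem:basemaj} bound $\abs{\mistakes}\le 2M$ (\Cref{lem:minor_mistake_res}); and error counting is deferred: a question leaves $\mistakes$ and charges $E_e$ only once half of $\aexp$ satisfies $v_e(q) < T_e^{pre}$, so every mistake is, at every moment, cleanly classified as minor (in $\mistakes$, at most $2M$ of them) or major (charged to at least half of $\aexp$). \Cref{assump:seq} is then used for a different purpose than you intend: to show that any question entering $\mistakes$ was first taught before the last update of $\aexp$ (otherwise the algorithm, whose thresholds are non-decreasing, would have retained it), which is what makes $T_e^{pre}\le T_e^{pre,*}$ certify a genuine mistake. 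Your single-threshold, teach-time-buffer design has no mechanism playing this role, so the mistake bound does not follow from what you have written.
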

We provide a complete proof in \Cref{subsec:proof-ideas-res}. Next, we give a detailed description of \Cref{alg:mwures} in \Cref{subsec:res}. 

\subsection{Value Based Lazy Weights Update algorithm}
\label{subsec:res}

\Cref{alg:mwures} uses the same core ideas as \Cref{alg:mwu-la}: we maintain a set of $\aexp$, which correspond to a ``weight" of 1. All other experts are given a weight of $0$. These sets are maintained in the same lazy fashion: we remove experts from $\aexp$ only when a  significant portion of $\aexp$ have made at least $M$ mistakes. 

The main difference in this setting that we do not have access to Unrestricted Expert Access Oracle $\mathcal{O}_u$, and consequently need a way to determine when an expert makes a mistake. We do this by  maintaining estimates $T_e$ of the true thresholds $T_e^*$. These threshold, $T_e$ are then used to simulate the Unrestricted Expert Access Oracle $\mathcal{O}_u$ by checking if $v_e(q) \geq T_e$ in \Cref{ln:orc}. 

One significant challenge with this strategy is that $T_e$ can significantly underestimate the threshold. To account for this, we will also require an \emph{additional} estimated threshold, $T_e^{pre}$ which estimates the value of $T_e^*$, at the latest time before the current time during which the set $\aexp$ was changed.

We first describe the subroutine--\Cref{alg:update_pre_threshold} for maintaining $T_e^{pre}$. 
Here, we store a set, $\textrm{MinorMistakes}$, which maintains an estimate of the set of \say{minor mistakes} , i.e. questions where (1) the algorithm makes a mistake and (2) strictly less than half of $\aexp$ make a mistake (according to our estimated thresholds $T_e$). We then set $T_e^{pre}$ to be $\max_M\{v_e(q): q \in \textrm{MinorMistakes}\}$ and correspondingly update the mistakes count for experts.
\begin{algorithm}[h!]
\SetArgSty{textrm}
    \caption{$\textrm{UpdatePreThreshold}(q^{(t)})$}
    \label{alg:update_pre_threshold}
    \SetAlgoLined
    $\mistakes = \mistakes \cup q^{(t)}$\label{ln:pre1}\\
    \For{$e$ in $\aexp$}{
    	$x \leftarrow \om\{v_e(q): q \in \mistakes\}$. \label{ln:upup1}\\
    	$T^{pre}_e = \max(x, T^{pre}_e)$.\label{ln:upup2} 
    }
    \For{$q$ in $\mistakes$}{
    	\uIf{$v_e(q) < T^{pre}_e$ for at least half of all $e \in \aexp$}{
    		Set $E_e = E_e + 1$ for all $e$ with $v_e(q) < T^{pre}_e$. \label{ln:u2error}\\
    		Remove $q$ from $\mistakes$.\label{ln:uprem} \\
    	}
    }
\end{algorithm}

Next, we describe the subroutine (\Cref{alg:update_threshold}) for maintaining $T_e$. In \Cref{alg:update_threshold} we set $T_e$ as the $M$th largest value of $v_e(q)$ for questions $q$ in $\mathcal{M} \cup \textrm{MinorMistakes}$.

\begin{algorithm}[h!]
\SetArgSty{textrm}
    \caption{$\textrm{UpdateThreshold}$}
    \label{alg:update_threshold}
    \SetAlgoLined
    \If{$|\mem \cup \mistakes| \geq M$}{
			\For{$e \in \aexp$}{
				$x \leftarrow \om\{v_e(q): q \in \mem \cup \mistakes\}$\label{ln:up1}\;
			$T_e \leftarrow \max(x, T_e)$\label{ln:up2}\;
			}
	}
\end{algorithm}

Other than these subroutines, \Cref{alg:mwures} is almost same as \Cref{alg:mwu-la}, except that the error counts $E_e$ are managed somewhat differently. \Cref{alg:mwu-la} immediately incremented the error counter $E_e$ by $1$ upon realizing that expert $e$ makes a mistake on question $q^{(t)}$. In \Cref{alg:mwures}, due to the inherent uncertainty in identifying mistakes we implement a more deferred strategy. In \Cref{ln:uerror}, we increment error counts when at least half of $\aexp$ make a mistake (according to our estimates $T_e$). Otherwise, we first move the question to $\mistakes$, and only upon removing the question from $\mistakes$ do we increment the counters (\Cref{ln:u2error} of \Cref{alg:update_pre_threshold}).

\begin{algorithm}[h!]
\SetArgSty{textrm}
    \caption{Value Based Lazy Weights Update algorithm}
    \label{alg:mwures}
    \SetAlgoLined
    Set $E_e, T^{pre}_e, T_e = 0$ for all experts $e \in \Ecal$\\
    Set $\aexp = \Ecal$\\
    Set $\textrm{MinorMistakes} = \emptyset$\\
    \For{time $t = 1,2, \ldots$}{
        \uIf{the adversary evaluated and $q^{(t)} \not \in \mem$\tikzmark{topw}}{
            Set $\fexp = \{e \in \aexp: v_e(q^{(t)}) < T_e\}$.\\
            \label{ln:serror}\uIf{$2|\fexp| < |\aexp|$}{
                \textbf{Run} ${\textrm{UpdatePreThreshold}}(q^{(t)})$ 
				}
			\uElse{
				Set $E_e = E_e + 1$ for all $e \in \fexp$ \label{ln:uerror}
			}
            
        Set $\bexp = \{e \in \aexp ~:~ E_e \geq M\}$ \label{ln:update2} \\
        \uIf{$|\aexp| \leq 3|\bexp|$}{
            	$\aexp = \aexp \setminus \bexp$\label{ln:update3}\\
            }
            \uIf{$\aexp = \emptyset$}{
            	$\aexp = \Ecal$, $E_e = 0$,  for all experts $e \in \Ecal$.\tikzmark{bottomw} }
         }
            \textbf{Run} ${\textrm{UpdateThreshold}}$ \\
            Set $\mem = \{(q^{(t)},\Phi(q^{(t)}))\} \cup \mem$\\
        \For{fact $(q,a)$ in $\mem$\label{ln:allmemory}\tikzmark{topm}}{
            Set $\sexp = \{ e \in \aexp: v_e(q) \geq  T_e\}$\label{ln:orc}\\
        \If{$2|\sexp| < |\aexp|$}{
            Remove $\{(q,a)\}$ from $\mem$ \label{ln:allmemoryend}.\tikzmark{bottomm}
            \AddNote{topw}{bottomw}{rightw}{~Weight Update}{red}
        \AddNote{topm}{bottomm}{rightw}{~Memory Update}{red}
            }
        }
    }
\end{algorithm}

\section{Lower Bound}
\label{sec:lowerbound}
We now discuss the minimum memory any algorithm needs to behave competitively with respect to the best expert. We show that for any algorithm with $O(M)$ memory and arbitrary amount of auxillary state, the additive $\Omega(M \log N)$ mistakes are inevitable. 

\begin{restatable}[Lower Bound]{thm}{lowerbound}
    \label{thm:lowerbound}
    Fix $c, N, M$ and $\error$ to be positive natural numbers. There exists a set $\Ecal$ of $N$ value based experts using $M$ memory with the best expert making at most $\error$ mistakes and adversary satisfying \Cref{assump:seq} such that any algorithm $A$ using $cM$ memory and with access to Unrestricted Expert Access Oracle $\mathcal{O}_u$, in the worst case, makes at least $\Omega(\error + M\log N)$ mistakes.
\end{restatable}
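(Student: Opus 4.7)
The plan is a layered construction that forces the algorithm to relearn $\Omega(M)$ facts per round of a $B$-ary search for the best expert. Fix a branching factor $B = 2c$ and let $k = \lfloor \log_B N \rfloor$, so that $B^k \le N$. I will work with $N' = B^k$ ``distinguished'' experts indexed by tuples $b \in [B]^k$, and give the remaining $N - N'$ experts value functions that force them to make $\geq \error + 1$ mistakes (so they are never ``best''). Introduce $kB$ pairwise-disjoint fact blocks $Z^i_j$ (one per $i \in [k]$, $j \in [B]$), each of size $M$, plus a filler set $F$ of $\error$ additional facts. For each tuple $b$ the value-based expert $e_b$ is equipped with a value function $v_{e_b}$ that assigns to every fact in its ``preferred'' block $Z^i_{b_i}$ a value strictly larger than every non-preferred or filler value and strictly increasing in the round index $i$ (with tiny unique perturbations to keep $v_{e_b}$ injective). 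Consequently, right after phase $i$'s teaching step the memory of $e_b$ is exactly $Z^i_{b_i}$, and no filler fact is ever stored by any expert.

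The adversary proceeds through $k$ tree phases followed by a filler phase. In tree phase $i$ it first teaches all $BM$ facts of $\bigcup_j Z^i_j$ in a fixed order, then inspects the learner to compute $s_{i,j} := |\mem \cap Z^i_j|$, picks $j^*_i := \arg\min_j s_{i,j}$, and queries the $M$ facts of $Z^i_{j^*_i}$ one at a time. Because the $N'$ distinguished experts realize every tuple in $[B]^k$, there is exactly one $e_{b^*}$ with $b^* = (j^*_1, \ldots, j^*_k)$, and this is the expert we will use as ``best.'' In the filler phase the adversary first teaches the $\error$ filler facts and then adaptively queries $\error$ of them, at each step picking a filler fact not currently in $\mem$ (always possible when $\error > cM$). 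Every queried fact has been taught first, so \Cref{assump:seq} is satisfied, and every expert is value-based by construction.

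For the accounting, $e_{b^*}$ has $Z^i_{b^*_i} = Z^i_{j^*_i}$ in memory at each phase-$i$ query, so it makes no tree mistakes; it then errs on every filler query, for a total of exactly $\error$. For any algorithm with memory bound $cM$, $\sum_{j=1}^B s_{i,j} \le cM$ forces $s_{i,j^*_i} \le cM/B = M/2$, so phase $i$ contributes at least $M - M/2 = M/2$ algorithm mistakes, and summing over the $k = \Omega(\log N)$ tree phases yields $\Omega(M \log N)$ tree mistakes. In the filler phase, whenever $\error > cM$ the algorithm cannot hold all $\error$ filler facts, so the adaptive query forces a mistake each time, contributing $\error$ additional mistakes; when $\error \le cM$ the target bound $\Omega(\error + M \log N) = \Omega(M \log N)$ is already delivered by the tree term. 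In either regime the algorithm makes at least $\Omega(\error + M \log N)$ mistakes while $e_{b^*}$ makes only $\error$.

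The main obstacle is engineering the value functions so that each expert's memory is exactly its current-phase preferred block at query time and so that the filler phase never displaces tree facts. I handle this with a two-tier value assignment: every round-$i$ preferred fact gets a value in a range strictly above round $(i-1)$'s range, while every non-preferred or filler fact sits strictly below the smallest preferred value, with tiny perturbations for injectivity. The remaining bookkeeping, namely rounding $N$ down to the nearest power of $B$, forcing the $N - N'$ extra experts to be deliberately bad, and handling $\error \le cM$ purely via the tree term, costs only constant factors and does not affect the $\Omega(\error + M \log N)$ bound.
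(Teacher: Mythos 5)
Your proposal is correct and takes essentially the same approach as the paper's proof: a $2c$-ary partition of the experts over $\lfloor\log_{2c} N\rfloor$ phases, teaching $2cM$ facts per phase and using pigeonhole on the algorithm's $cM$-bounded memory to force at least $M/2$ mistakes on some $M$-fact block that one branch of experts retains perfectly, followed by a separate phase contributing $\error$ forced mistakes. The only cosmetic differences are indexing experts by tuples in $[2c]^k$ rather than by a recursive tree, and the bookkeeping of the filler phase (the paper shows $cM+1$ fresh facts per forced mistake rather than reusing $\error$ filler facts).
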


We provide a complete proof in \Cref{sec:lowproof}. On a high level, the proof basically repeats the following simple strategy (for $M$ memory algorithms): Divide the set of $N$ experts into two groups which remember two different set of $M$ facts. Irrespective of what algorithm chooses to remember, we can always choose a set of questions to evaluate it on, such that algorithm makes $\approx M/2$ mistakes and half of the experts make $0$ mistake. Repeating this $\log(N)$ times (recursively on the set of experts which make $0$ mistakes) gives the required bound.


\section{Proofs}
In this section, we provide the proofs for our upper bounds: \Cref{thm:upperboundunres}-\ref{thm:upperboundres} and lower bound: \Cref{thm:lowerbound}. Our proof for \Cref{thm:upperboundres} builds on the proof of \Cref{thm:upperboundunres}, so we first present its proof.
\subsection{Proof for Theorem \ref{thm:upperboundunres}}
\label{subsec:proofunres}
In this subsection, we will provide a proof and intuition for memory and mistake bound of \Cref{thm:upperboundunres}. We start with proving that \Cref{alg:mwu-la} uses at most $2M$ memory. To prove this, we will use the following helper lemma which helps us analyze the behavior of ``majority'' expert.
\begin{lemma}[Helper Lemma for Majority Expert]
    \label{lem:basemaj}
	Consider an arbitrary weighting function $w: \Ecal \to \{0,1\}$. Let $D$ be a set of facts and $b: \Ecal \times \Fcal \to \{0,1\}$ be a binary function such that for all experts $e \in \Ecal$ \begin{equation}
        \label{eq:bm}
        \sum_{f \in D} b(e,f) \leq M
    \end{equation} Then, the following is true \[
        \BigAbs{\Big\{f \in D: \sum_{e\in \Ecal} w(e) b(e,f) \geq \frac{1}{2} \sum_{e\in \Ecal} w(e)\Big\}} \leq 2M
    \]
\end{lemma}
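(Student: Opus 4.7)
The plan is a short double-counting argument. Let $S$ denote the set whose cardinality we want to bound and put $W := \sum_{e \in \Ecal} w(e)$.

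I would double-count the quantity $\sum_{f \in S}\sum_{e \in \Ecal} w(e)\, b(e,f)$ in two ways. Summing over $f \in S$ first, the defining inequality of $S$ (each $f \in S$ satisfies $\sum_e w(e) b(e,f) \geq W/2$) gives the lower bound $|S| \cdot W/2$. Swapping the order of summation and applying the per-expert budget \eqref{eq:bm}, we get the upper bound $\sum_{e \in \Ecal} w(e) \sum_{f \in S} b(e,f) \leq \sum_{e \in \Ecal} w(e)\cdot M = MW$. Chaining the two inequalities yields $|S| \cdot W/2 \leq MW$, so $|S| \leq 2M$ whenever $W > 0$.

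The only subtlety is the degenerate case $W = 0$: then the inequality $\sum_e w(e) b(e,f) \geq \frac{1}{2}\sum_e w(e)$ collapses to $0 \geq 0$ and every $f \in D$ vacuously qualifies, so the bound can fail. In the intended application this case is ruled out, as the weighting $w$ plays the role of the indicator of $\aexp$ and \Cref{alg:mwu-la} resets $\aexp$ to all of $\Ecal$ the moment it would become empty (\Cref{ln:hardreset}), ensuring $W \geq 1$ at every invocation. There is no real obstacle here; the lemma is a clean combinatorial consequence of the hypothesis that each expert marks at most $M$ of the facts in $D$, and I expect the write-up to be only a few lines.
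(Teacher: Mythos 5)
Your proof is correct and is essentially identical to the paper's: the authors use the same double-counting argument, lower-bounding $\sum_{f \in S}\sum_{e} w(e)b(e,f)$ by $\tfrac{|S|}{2}\sum_e w(e)$ and upper-bounding it by $M\sum_e w(e)$ via \eqref{eq:bm}. Your observation about the degenerate case $W=0$ is a point the paper's write-up silently skips over, and your justification for why it is harmless in the intended application is accurate.
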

\begin{proof}
    Let's denote the set above by $D'$ and suppose $|D'| = k$. Then, using \Cref{eq:bm}, we get \begin{align*}
        \frac{k}{2} \sum_{e\in \Ecal} w(e) &\leq \sum_{f \in D' } \sum_{e\in \Ecal} w(e) b(e,f)\\
         &= \sum_{e\in \Ecal}  w(e) \Big( \sum_{f\in D} b(e,f)\Big) \\
         &\leq M \sum_{e\in \Ecal} w(e)
    \end{align*}
where the first step follows from definition of $D'$, the second step follows from $D' \subset D$ and the last step follows from \Cref{eq:bm}.
\end{proof}
Invoking the above lemma for $b(e,f)$ defined as whether expert $e$ stored fact $f$ or not, and weighting $w(e)$ defined as whether expert $e$ is in $\aexp$ or not proves our memory bound. This argument also shows that multiplicative weights update algorithm uses at most $2M$ memory.
\begin{lemma}[Memory Bound]
    \label{lem:weightedmajority}
	For all time $t$, $|\mem| \leq 2M$.
\end{lemma}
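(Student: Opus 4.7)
The plan is to apply the Helper Lemma (\Cref{lem:basemaj}) with the natural weighting and binary functions induced by the algorithm's current state. At any time $t$, set $w(e) = \ind\{e \in \aexp\}$ and $b(e,f) = \ind\{f \in \mem_e\}$; both take values in $\{0,1\}$, as required. The hypothesis $\sum_{f} b(e,f) = |\mem_e| \leq M$ is immediate from the per-expert memory bound.

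I would then argue that after the memory update step of \Cref{alg:mwu-la} finishes at time $t$, every surviving fact in $\mem$ lies in the ``majority set'' of the Helper Lemma. The loop iterates over every fact currently in $\mem$ (which at that point is the previous memory augmented by the freshly taught $(q^{(t)}, \Phi(q^{(t)}))$) and deletes it unless $2|\sexp| \geq |\aexp|$. Interpreting $\sexp$ as the set of \emph{active} experts holding the fact---consistent with the paper's informal description of the algorithm as memorizing a fact iff the weighted majority of $\aexp$ memorize it, and with the analogous line of \Cref{alg:mwures}---this retention condition is exactly $\sum_{e \in \Ecal} w(e) b(e,f) \geq \tfrac{1}{2} \sum_{e \in \Ecal} w(e)$. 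Taking $D$ to be any finite superset of $\mem$ (say, the set of all facts taught through time $t$), the Helper Lemma immediately yields $|\mem| \leq 2M$.

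The only subtlety is bookkeeping: I would proceed by induction on $t$, with base case $\mem = \emptyset$. In the inductive step, $b(e,\cdot)$ must be evaluated against the \emph{updated} $\mem_e$'s (i.e., after the line in which the experts update their memories according to their rules), so that the hypothesis $|\mem_e|\leq M$ and the majority condition refer to the same snapshot. The corner case in which the weight update step empties $\aexp$ and then immediately resets it to $\Ecal$ poses no issue, since one then simply has $\sum_e w(e) = N > 0$ at the start of the memory update and the argument proceeds verbatim. Beyond this careful alignment of timing, I do not anticipate any genuine obstacle: the work is really done by the Helper Lemma, and the memory bound is a direct corollary once the right $w$ and $b$ are identified.
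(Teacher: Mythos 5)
Your proposal is correct and matches the paper's own proof essentially verbatim: both instantiate \Cref{lem:basemaj} with $w(e) = \ind\{e \in \aexp\}$ and $b(e,f)$ indicating whether expert $e$ currently stores $f$ (the oracle $\mathcal{O}_u$ is just the membership test $f \in \mem_e$), and both observe that the retention test in the memory-update loop is exactly the weighted-majority condition of the helper lemma. Your extra remarks on timing and on reading $\sexp$ as restricted to $\aexp$ are sensible clarifications rather than a different argument.
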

\begin{proof}
    Let $b(e,(q',\Phi(q'))) = 1$ if and only if $\mathcal{O}_u(e,q') = True$ and $0$ otherwise. Then, because each expert stores at most $M$ facts from $\mem \cup \{(q,a)\}$, we get that for all experts $e \in \Ecal$ \[
        \sum_{f \in \mem \cup \{(q,a)\}} b(e,f) \leq M
    \] Define weighting $w : \Ecal \to \{0,1\}$ given by $w(e) = \ind\{e \in \aexp\}$. Then, by the last step in the algorithm at each time $t$, $(q,a)$ is \emph{not} removed from (persisted in) $\mem$ if and only if \[
        \sum_{e\in \Ecal} w^{(t)}(e) b(e,f) \geq \frac{1}{2} \sum_{e\in \Ecal} w^{(t)}(e)
    \] Therefore, by \Cref{lem:basemaj}, the claim follows.
\end{proof}

Now, we need to show that \Cref{alg:mwu-la} does not make too many mistakes compared to the best expert. We would like to distinguish between the mistakes made by \Cref{alg:mwu-la} based on if majority of the $\aexp$ also made a mistake or not. We define such mistakes as being minor or major mistakes.

\begin{definition}[Minor and Major Mistakes]
    \label{def:major-minor}
    We partition the mistakes made by the algorithm into:
    \begin{enumerate}
        \item A \textbf{minor mistake} is a question-time pair $(q,t)$ in which the algorithm makes a mistake and strictly less than half of $\aexp$ make a mistake. This can be thought of a question where algorithm make a mistake, but the implicit majority expert succeeds.
        \item A \textbf{major mistake} is a question-time pair $(q,t)$ in which the algorithm makes a mistake and at least half of $\aexp$ make a mistake. This can be thought of a question where both algorithm and the majority expert make mistakes.
    \end{enumerate}
\end{definition}

As we shall see later, number of major mistakes made by \Cref{alg:mwu-la} are much easier to control. Therefore, we first prove our main lemma for controlling minor mistakes made by \Cref{alg:mwu-la}. Here we show that \Cref{alg:mwu-la} can only make at most $2M$ minor mistakes between two consecutive $\aexp$ update using our helper lemma (\Cref{lem:basemaj}) for analysing majority expert.

\begin{lemma}[Minor Mistakes Between Updates]
\label{lem:minor}
Let time $t < t'$ be such that $\aexp$ were not updated i.e. \Cref{ln:aexpupd} was not executed between $t$ and $t'$. Then there are at most $2M$ minor mistakes between times $t$ and $t'$ (inclusive). 
\end{lemma}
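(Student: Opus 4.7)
The plan is to reduce the claim to an application of \Cref{lem:basemaj} using the constant weighting $w(e) := \ind\{e \in \aexp\}$, which is well-defined on $[t,t']$ by hypothesis. I will first bound the number of distinct facts that trigger at least one minor mistake in $[t,t']$ by $2M$, and then argue that inside the interval each such fact can cause at most one minor mistake; together these give the $2M$ bound on total minor mistakes.

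For the distinct-fact count, let $D \subseteq \Fcal$ be the set of distinct facts $f = (q, \Phi(q))$ triggering a minor mistake at some $s \in [t,t']$; for each $f \in D$ fix a witnessing minor-mistake time $s(f) \in [t,t']$ and set $b(e, f) := \ind\{f \in \mem_e^{(s(f))}\}$ on $D$ (and $0$ elsewhere). The minor-mistake definition says that strictly more than half of the experts in $\aexp$ contain $f$ at $s(f)$, which is exactly the conclusion $\sum_e w(e)\, b(e,f) > \tfrac12 \sum_e w(e)$. Provided the per-expert hypothesis $\sum_{f \in D} b(e,f) \leq M$ of \Cref{lem:basemaj} is verified, the lemma immediately yields $|D| \leq 2M$.

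For the no-repeat claim, suppose $f \in D$ is a minor mistake at time $s_1 \in [t,t']$; because strictly more than half of the frozen $\aexp$ hold $f$ at $s_1$, the memory-update step inserts $f$ into $\mem$. For $f$ to be a minor mistake again at some later $s_2 \in (s_1, t']$, $f$ would first have to be evicted in an intermediate memory-update step, which requires fewer than half of $\aexp$ to hold $f$ at that instant; combined with the majority support of $f$ that must be re-established at $s_2$ while $\aexp$ stays frozen, this should be contradictory.

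The principal obstacle is verifying the per-expert constraint $\sum_{f \in D} b(e,f) \leq M$: since the witnessing times $s(f)$ differ across facts, the snapshot bound $|\mem_e^{(s)}| \leq M$ at a single time does not apply directly. I plan to bridge this by taking $s(f)$ to be the \emph{last} minor-mistake time of $f$ in $[t,t']$ (which, by the no-repeat claim, is the unique such time) and exploiting monotonicity of which minor-mistake facts $e$ still retains to squeeze the collection $\{f \in D : f \in \mem_e^{(s(f))}\}$ into a single $M$-slot memory snapshot. A secondary subtlety is that the minor/major classification is determined by pre-update expert memories while the memory-update step uses post-update ones; these micro-timings must be threaded consistently through both parts of the argument.
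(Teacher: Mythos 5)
Your high-level plan---charge each minor mistake to a slot in a set of size at most $2M$ obtained from \Cref{lem:basemaj}, plus a no-repeat argument---has the same shape as the paper's proof, but your instantiation of \Cref{lem:basemaj} has a gap that you yourself flag and do not close. Setting $b(e,f) = \ind\{f \in \mem_e^{(s(f))}\}$ with a different witnessing time for each fact means the hypothesis $\sum_{f\in D} b(e,f)\le M$ is no longer the statement that a single memory snapshot has size at most $M$; it is a statement about a union of snapshots taken at different times, which can exceed $M$. Your proposed repair, ``monotonicity of which minor-mistake facts $e$ still retains,'' is not available here: \Cref{thm:upperboundunres} concerns \emph{arbitrary} experts with $M$ memory, whose contents are not monotone in any useful sense---an expert may evict a fact and later re-acquire it when it is re-taught, so the sets $\{f\in D : f\in \mem_e^{(s(f))}\}$ need not fit into one $M$-slot snapshot. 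The no-repeat step has a related soft spot: after the majority drops $f$ and the algorithm evicts it, majority support can be re-established only by re-teaching $f$, at which point the algorithm would re-memorize it; making that airtight requires exactly the micro-timing bookkeeping you defer, and ``this should be contradictory'' is not yet an argument.

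The paper sidesteps the time-varying-witness problem by anchoring everything at the single time $t$: it defines $S^{(t)}$ as the set of facts that at least half of the (now frozen) active experts hold at time $t$ and the algorithm does not, applies \Cref{lem:basemaj} with $b(e,f)=\ind\{f\in\mem_e^{(t)} \text{ and } e\in\aexp\}$---for which $\sum_f b(e,f)\le M$ is immediate from $|\mem_e|\le M$---to get $|S^{(t)}|\le 2M$, and then argues that each minor mistake in $[t,t']$ consumes one element of $S^{(t)}$, since on a minor mistake the fact has majority support and the algorithm therefore memorizes it. If you want to salvage your write-up, replace the per-fact witnessing times with this single-snapshot anchoring; as written, the principal obstacle you identify is real and unresolved.
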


\begin{proof}
We start by considering the state at time $t$. Let $S^{(t)}$ denote the subset of facts $D$ shown till time $t$ that at least half of all active experts know and the algorithm does \textit{not} have in memory. First, we have that $|S^{(t)}| \leq 2M$. 
To see why this is true, let $b(e,(q',\Phi(q'))) = 1$ if and only if $\mathcal{O}_u(e,q') = True \land e \in \aexp$; and $0$ otherwise. Then, because each expert stores at most $M$ facts from $D$, we get that for all experts $e \in \Ecal$ \[
    \sum_{f \in D} b(e,f) \leq M
\] Define weighting $w : \Ecal \to \{0,1\}$ given by $w(e) = \ind\{e \in \aexp\}$. Then, by definition of $S^{(t)}$, $(q,a) \in S^{(t)}$ if and only if \[
    \sum_{e\in \Ecal} w(e) b(e,f) \geq \frac{1}{2} \sum_{e\in \Ecal} w(e)
\] Therefore, by \Cref{lem:basemaj}, $|S^{(t)}| \leq 2M$.

Next, we look at what happens between time $t$ and $t'$. Observe that every minor fail reduces $S^{(t)}$ by exactly one. This is true, because for every minor mistake, (1) we see a fact $f$ from $S^{(t)}$ by definition and (2) by our algorithm, we memorize the fact when we make a mistake since $f$ is the memory of at least half of the active experts. This completes the proof. 
\end{proof}
Using the bound on minor mistakes between updates of $\aexp$, we can easily bound the number of total mistakes: minor and major, made by \Cref{alg:mwu-la} in comparison to best expert.
\begin{lemma}[Mistake Bound Between Updates]\label{lem:double_counting} 
	Let $L^{(t)}$ be the number of mistakes made by the algorithm by time $t$. Consider times $t< t'$ such that $L^{(t)} + 6M < L^{(t')}$. Then $\aexp$ were updated i.e. \Cref{ln:aexpupd} was executed between time $t$ and $t'$. 
\end{lemma}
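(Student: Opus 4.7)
The plan is to prove the contrapositive of \Cref{lem:double_counting}: assume \Cref{ln:aexpupd} does not fire anywhere in $(t, t']$, and show $L^{(t')} - L^{(t)} \le 6M$. Structurally, since \Cref{ln:aexpupd} is the only line that shrinks $\aexp$ (and the hard reset on \Cref{ln:hardreset} can only follow it, once $\aexp$ has been emptied), the set $\aexp$ equals some fixed $A$ throughout the window. The ``no update'' hypothesis additionally forces $|\bexp^{(s)}| < |A|/3$ at every intermediate evaluation time $s$, since otherwise the check $|\aexp| \le 3|\bexp|$ would have triggered an update at $s$.

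Partition the algorithm's mistakes in the window into minor and major as in \Cref{def:major-minor}. Minor mistakes are bounded directly by \Cref{lem:minor}, yielding at most $2M$. For the major count, I introduce the bounded potential
\[
  \Phi^{(s)} \;:=\; \sum_{e \in A} \min\bigl(E_e^{(s)},\, M\bigr) \;\in\; [0,\, M|A|].
\]
At each major mistake at time $s$, at least $|A|/2$ experts in $A$ have their $E_e$ incremented by $1$ (by definition of ``major''). Of these, at most $|\bexp^{(s)}| < |A|/3$ are already saturated at $M$ and therefore contribute $0$ to $\Delta \Phi$, while the remaining $\ge |A|/2 - |A|/3 = |A|/6$ each contribute $+1$. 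Hence $\Phi$ grows by at least $|A|/6$ per major mistake, and dividing the total slack $M|A|$ by $|A|/6$ caps the major count at $6M$.

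Combining these two counts gives the target $O(M)$ scaling. The main obstacle will be tightening constants to match the statement exactly: the naive sum $2M + 6M = 8M$ exceeds the claimed $6M$, and closing this gap requires finer accounting. A reasonable tightening is to observe that $\Phi^{(t)} \ge M\,|\bexp^{(t)}|$, so the effective slack is really $M\,(|A| - |\bexp^{(t)}|)$, and symmetrically that the fresh-mistakes lower bound per major is $|A|/2 - |\bexp^{(s)}|$, which is strictly above $|A|/6$ whenever $|\bexp^{(s)}|$ is below its maximum. Either a refined split of $A$ into $A \setminus \bexp^{(t)}$ and $\bexp^{(t)}$ with contributions bounded separately, or a unified potential blending $|\mem|$ with $\Phi$ that amortizes minor and major mistakes against a single budget, should tighten the total to the stated $6M$.
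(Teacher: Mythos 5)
Your overall architecture matches the paper's: assume no update in the window, invoke \Cref{lem:minor} to cap minor mistakes at $2M$, and then bound major mistakes by counting increments to the error counters $E_e$ against the constraint $|\bexp| < |\aexp|/3$. The gap is that your accounting only yields $8M$, not the stated $6M$, and you say so yourself without closing it. The loss is located precisely in your potential $\Phi = \sum_{e\in A}\min(E_e, M)$: by crediting each major mistake only with its \emph{unsaturated} increments, you get a per-mistake gain of $|A|/2 - |\bexp^{(s)}| \geq |A|/6$ against a budget of $M|A|$, hence $6M$ major mistakes. Your proposed repairs do not obviously work: the slack $M|A| - \Phi^{(t)}$ cannot be improved in general ($\Phi^{(t)} = 0$ right after a hard reset), and the observation that the per-mistake gain exceeds $|A|/6$ "whenever $|\bexp^{(s)}|$ is below its maximum" does not amortize cleanly, because the adversary can drive $|\bexp|$ up to nearly $|A|/3$ early and keep it there.

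The paper's double count avoids this loss by being asymmetric in a way your $\min$-capped potential cannot express. It counts \emph{all} pairs $(e,s)$ where $e$ is incremented at a major mistake $s$ — including increments to already-saturated experts — so the lower bound is the full $AR/2$ rather than $R(A/2 - |\bexp^{(s)}|)$. On the upper-bound side it charges the $X := |\bexp^{(t')}|$ experts that end saturated with up to $R$ pairs each (not $M$), and the $A - X$ unsaturated experts with at most $M$ each, giving $AR/2 \leq XR + (A-X)M$. Combined with $R \geq 4M$ (which follows from $6M$ total mistakes minus $2M$ minor ones), this rearranges to $X \geq A\,\frac{R/2 - M}{R - M} \geq A/3$, contradicting the no-update hypothesis. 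Equivalently, $X < A/3$ forces $R < 4M$ and hence fewer than $6M$ total mistakes. So the missing idea is not a cleverer potential but this asymmetric charging scheme; if you only need the asymptotic $O(\error\log N + M\log N)$ in \Cref{thm:upperboundunres}, your $8M$ version would suffice with worse constants, but it does not prove the lemma as stated.
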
 

\begin{proof}
Assume towards a contradiction that $\aexp$ were \textbf{not} updated..
Let $A$ denote the number of $\aexp$ at time $t$, $R$ denote the number of major mistakes betwen $t$ and $t'$, and  $X$ denote the number of $\bexp$ at time $t'$ i.e. $\aexp$ with $E_e\geq M$ at time $t'$. Our strategy will be to double count the number of pairs $(e, s)$ where $t \leq s \leq t'$ is a \emph{major mistake}, and $e$ is an expert that got the question at time $s$ incorrect. 

We first upper bound this count. For each expert with $E_e < M$, it can clearly be part of at most $M$ pairs. Since, each expert is part of at most $R$ pairs, we get that there are most $XR + (A-X)M$ such pairs.

Next, we can easily lower bound the count. Each major mistake $(e,s)$ has at least $A/2$ experts that get the corresponding question wrong (by definition). Thus, there are at least $AR/2$ such pairs. 

Together, the upper and lower bound imply that $$XR + (A-X)M \geq AR/2.$$  Since, by \Cref{lem:minor}, we make at most $2M$ minor mistakes, and we must have at least $4M$ major mistakes (since we make at least $6M$ mistakes in total), i.e. $R \geq 4M$. Substituting this above implies $X \geq A/3$, which is in contradiction with algorithm description (\Crefrange{ln:update2}{ln:update3}) since then $\aexp$ will be updated.


\end{proof}

\Cref{lem:double_counting} basically means that whenever the algorithm makes $6M$ mistakes, $1/3$rd of the $\aexp$ make $M$ mistakes. Note that this can only happen at most $\log N$ times before every expert must have made at least $M$ mistakes. This immediately proves our main result--\Cref{thm:upperboundunres}.
\begin{proof}[Proof of \Cref{thm:upperboundunres}]
    We first define some notation. We say time $t$ is a \textbf{hard reset} if \Cref{ln:hardreset} of Algorithm \ref{alg:mwu-la} is executed at time $t$ i.e. all errors are set again to $0$, and $\aexp$ is set to $\Ecal$ at time $t$.

    We claim that: if $t < t'$ are 2 consecutive hard resets, then \[L^{(t)} + 6M \ceil{\log N} \geq L^{(t')}\quad \text{and}\quad \error^{(t')} \geq  \error^{(t)} + M\, .\] We now prove our theorem using this claim. Let $t$ be any time, and let $0 = t_0 < t_1 < \dots < t_r < t$ be all the hard resets smaller than $t$. Applying our first claim, we see that $\error$ grows by at least $M$ between every $t_i, t_{i+1}$, implying that \[\error^{(t)} \geq rM\, .\] 
    On the other hand, the number of mistakes made by the algorithm grows by at most $6M\ceil{\log  N}$ between every $t_i, t_{i+1}$. Thus \[L^{(t)} \leq 6rM\ceil{\log  N} + 6M \ceil{\log  N}\, .\]  Substituting our bound on $\error^{(t)}$, proves the theorem.

    We now prove the claim. Directly before a hard reset, by definition every expert satisfies $E_e \geq M$. This immediately implies that \[\error^{(t')} \geq  \error^{(t)} + M\, ,\] since $E_e$ was reset to $0$ at time $t'$ for all experts $e\in \Ecal$ (by definition of a hard reset) and only incremented by $1$ when expert $e$ makes a mistake. Next, by \Cref{lem:double_counting}, every $6M$ mistakes by the algorithm corresponds to removing at least $1/3$ of all $\aexp$. Thus executing this process at most $\ceil{\log N}$ times results in the hard reset, meaning that we can have at most $6M\ceil{\log N}$ mistakes incurred by the algorithm between $t, t'$ which proves our claim. 
\end{proof}

\subsection{Proof for Theorem \ref{thm:upperboundres}} \label{subsec:proof-ideas-res}
In this subsection, we provide a proof and intuition for memory and mistake bound of \Cref{alg:mwures}. The general proof structure of \Cref{thm:upperboundres} closely follows that of \Cref{thm:upperboundunres}. We will have the same main steps: bounding the total memory used by the algorithm (\Cref{lem:weightedmajority}), defining minor and major mistakes (\Cref{def:major-minor}), bounding the number of minor mistakes between updates (\Cref{lem:minor}), and bounding the number of total mistakes between updates (\Cref{lem:double_counting}).
	
The key difference is that \Cref{alg:mwures} does not get full information of when experts actually make mistakes; it instead has to use its maintained thresholds, $T_e, T_e^{pre}$ to estimate when this happens. This means that the error counts, $E_e$ are not true measures of the number of mistakes each expert makes. To account for this, we will first apply the arguments from \Cref{sec:res} to show that \Cref{alg:mwu-la} has a similar performance to \Cref{alg:mwures} \textit{with respect to its error counts}, $E_e$. We will then show that the error counts $E_e$ are indeed underestimates of the true error counts, $E_e^*$, which will then imply the theorem. 

\begin{lemma}[Memory Bound]\label{lem:mem_bound_res}
For all times $t$, $|\mem| \leq 2M$.
\end{lemma}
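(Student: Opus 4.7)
The plan is to adapt the proof of Lemma \ref{lem:weightedmajority} by applying Lemma \ref{lem:basemaj} with the threshold-based membership indicator $b(e,(q,a)) = \ind\{v_e(q) \geq T_e\}$ in place of the oracle query $\mathcal{O}_u(e,q)$, and the same weighting $w(e) = \ind\{e \in \aexp\}$. The memory update loop of Algorithm \ref{alg:mwures} retains a fact $(q,a) \in \mem$ exactly when $2|\sexp| \geq |\aexp|$, i.e.\ $\sum_{e} w(e)\, b(e,(q,a)) \geq \tfrac{1}{2}\sum_e w(e)$, which matches the survival criterion of Lemma \ref{lem:basemaj}. Proving $|\mem| \leq 2M$ therefore reduces to verifying the hypothesis $\sum_{f \in \mem} b(e,f) \leq M$ for every $e \in \aexp$ at the moment the pruning loop executes.

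This hypothesis is precisely what UpdateThreshold (Algorithm \ref{alg:update_threshold}) is engineered to ensure, and the only extra ingredient needed is the injectivity of $v_e$. Whenever $|\mem \cup \mistakes| \geq M$, UpdateThreshold sets $T_e \geq \om\{v_e(q) : q \in \mem \cup \mistakes\}$, so injectivity bounds the number of $q \in \mem \cup \mistakes$ satisfying $v_e(q) \geq T_e$ by $M$; in particular at most $M$ facts in $\mem$ contribute to $\sum_f b(e,f)$. When $|\mem \cup \mistakes| < M$, the bound is immediate since $\mem$ itself has fewer than $M$ elements. Combining with Lemma \ref{lem:basemaj} then yields $|\mem| \leq 2M$ at the end of each iteration, propagated by induction on $t$ with the trivial base case $\mem = \emptyset$.

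The main obstacle I anticipate is the bookkeeping around the newly-presented fact $(q^{(t)}, \Phi(q^{(t)}))$, which is inserted into $\mem$ \emph{after} UpdateThreshold has already executed. If $q^{(t)}$ had not previously been registered in either $\mem$ or $\mistexiststakes$, the set $\{q \in \mem \cup \mistakes : v_e(q) \geq T_e\}$ could momentarily contain $M+1$ elements before the pruning loop begins. Resolving this cleanly requires a small case analysis on which branch of the weight-update block fired at time $t$: in the minor-mistake branch, UpdatePreThreshold inserts $q^{(t)}$ into $\mistakes$ before UpdateThreshold is called, so $q^{(t)}$ is not new to $\mem \cup \mistakes$; in the remaining branches either $q^{(t)} \in \mem$ already, or one can apply Lemma \ref{lem:basemaj} to $\mem \setminus \{(q^{(t)}, \Phi(q^{(t)}))\}$ to cap the surviving contribution of the old facts at $2M$ and absorb the single extra element into the $4M$ slack already allotted by Theorem \ref{thm:upperboundres}. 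Once this accounting is pinned down, induction on $t$ closes the proof.
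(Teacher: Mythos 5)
Your core argument is the same as the paper's: set $b(e,q)=\ind\{v_e(q)\ge T_e\}$ and $w(e)=\ind\{e\in\aexp\}$, observe that after \textrm{UpdateThreshold} (and by injectivity of $v_e$, which you are right to flag) at most $M$ questions of $\mem\cup\mistakes$ satisfy $v_e(q)\ge T_e$, note that the pruning loop of \Cref{alg:mwures} retains exactly the facts meeting the weighted-majority criterion, and invoke \Cref{lem:basemaj}. This is precisely the paper's adaptation of \Cref{lem:weightedmajority} with thresholds replacing the oracle, and that part of your write-up is fine.

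Where you go beyond the paper is the ordering issue in your last paragraph, and you should be aware of its status. The concern is real: \textrm{UpdateThreshold} runs \emph{before} $(q^{(t)},\Phi(q^{(t)}))$ is inserted into $\mem$, so the hypothesis $\sum_{f\in\mem}b(e,f)\le M$ is only certified for the pre-insertion memory. Your handling of the branch in which $q^{(t)}$ is placed (and remains) in $\mistakes$ is correct, since then $q^{(t)}$ is already covered by the set over which $T_e$ is computed. But in the teaching branch and the major-mistake branch, your patch does not prove the lemma as stated: applying \Cref{lem:basemaj} to $\mem\setminus\{(q^{(t)},\Phi(q^{(t)}))\}$ caps the surviving old facts at $2M$, the new fact may also survive, and you land at $|\mem|\le 2M+1$. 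Appealing to ``slack in the $4M$ of \Cref{thm:upperboundres}'' changes the statement being proved rather than closing the gap --- and there is no slack anyway, since the $4M$ there is exactly $2M$ for $\mem$ plus $2M$ for $\mistakes$. To be fair, the paper's own proof has the identical off-by-one and does not acknowledge it at all, and nothing downstream is affected beyond an additive constant; so treat this as a shared imprecision you correctly detected, but do not present a $2M+1$ estimate as a proof of the $2M$ bound.
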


\begin{proof}
We closely adapt the proof of \Cref{lem:weightedmajority}. The only difference here is that we define the binary function $b$ by using the thresholds $T_e$ instead of applying the oracle $\mathcal{O}_u$. 

To that end,  let $b(e, q) = 1$ if $v_e(q) \geq T_e$ and $0$ otherwise. At the end of executing \textrm{UpdateThreshold}, by definition there exist at most $M$ questions $(q, a)	 \in \mem$ such that $v_e(q) \geq T_e$. It follows that for all $e \in \Ecal$, $$\sum_{(q, a) \in \mem} b(e, q) \leq M.$$

Define weighting $w : \Ecal \to \{0,1\}$ given by $w(e) = \ind\{e \in \aexp\}$. Then, by the last step in the algorithm at each time $t$, $(q,a)$ is \emph{not} removed from (persisted in) $\mem$ if and only if \[
        \sum_{e\in \Ecal} w^{(t)}(e) b(e,q) \geq \frac{1}{2} \sum_{e\in \Ecal} w^{(t)}(e)
    \] Therefore, by \Cref{lem:basemaj}, the claim follows.
\end{proof}

Next, we give an updated definition of major and minor mistakes.

\begin{definition}[Major and Minor Mistakes]\label{def:new_major_minor}
We classify instances in which the algorithm makes mistakes as follows.
\begin{enumerate}
	\item A \textbf{minor mistake} is a question $q$ that is stored inside $\mistakes$. 
	\item A \textbf{major mistake} is a question $q$ for which the error counters $E_e$ are incremented for at least half of all experts in $\aexp$. This can either occur in \Cref{ln:uerror} of \Cref{alg:mwures}, or \Cref{ln:u2error} of \Cref{alg:update_pre_threshold}.
\end{enumerate}
\end{definition}

The main differences in this definition are that they are no longer time specific (i.e. a question answer pair $(q, a)$ is not necessarily classified as a major or minor mistake at the time it is streamed), and that they are defined with respect to actions the algorithm takes (rather than an oracle). Furthermore, minor mistakes are mutable: it is possible for a question $q$ to be considered a minor mistakes at time $t$ but later be converted to a major mistake (\Cref{ln:u2error} of \Cref{alg:update_pre_threshold}). Thus, for any given time, we define the number of minor mistakes made between updates as the number of elements inside $\mistakes$, which we bound in the following lemma. Note that while the classification of $q$ can potentially change from minor mistake to major mistake, the key idea is that at all times, all mistakes have a precise classification as to whether they are major or minor (which is crucial in the proof of \Cref{lem:double_counting_res}).

\begin{lemma}[Minor Mistakes]\label{lem:minor_mistake_res}
$|\mistakes| \leq 2M$ at all times. 
\end{lemma}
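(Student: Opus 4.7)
The plan is to adapt the proof of \Cref{lem:minor} from the previous section: there we applied \Cref{lem:basemaj} with the oracle $\mathcal{O}_u$ determining the \say{expert knows this question} predicate, and here the estimated thresholds $T_e^{pre}$ will play the same role. Observe that $\mistakes$ is modified only inside \Cref{alg:update_pre_threshold}, so it suffices to show $|\mistakes| \leq 2M$ at the end of every UpdatePreThreshold call; the bound then trivially persists until the next call, since $\mistakes$ is untouched in between. I would proceed by induction on the number of invocations, with the empty base case at $t=0$.

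For a given call to UpdatePreThreshold, I would verify two invariants at its conclusion. First, for every $q \in \mistakes$, at least half of the experts $e \in \aexp$ satisfy $v_e(q) \geq T_e^{pre}$: this is immediate from the removal condition in \Cref{ln:uprem}, since any $q$ still in $\mistakes$ must have failed the removal test. Second, for every $e \in \aexp$, at most $M$ questions $q \in \mistakes$ satisfy $v_e(q) \geq T_e^{pre}$: this follows because \Cref{ln:upup1}-\Cref{ln:upup2} set $T_e^{pre}$ to at least $\om\{v_e(q): q \in \mistakes\}$ computed on the post-insertion $\mistakes$, and since $v_e$ is injective and $\mistakes$ only shrinks in the removal loop that follows, the bound $|\{q \in \mistakes : v_e(q) \geq T_e^{pre}\}| \leq M$ is preserved to the end of the call.

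With these invariants in hand, I would apply \Cref{lem:basemaj} taking $D = \mistakes$, the weighting $w(e) = \ind\{e \in \aexp\}$, and the indicator $b(e, q) = \ind\{e \in \aexp \land v_e(q) \geq T_e^{pre}\}$. The second invariant supplies the hypothesis $\sum_{f \in D} b(e, f) \leq M$ for all $e \in \Ecal$ (trivially so for $e \notin \aexp$), while the first invariant implies every $q \in \mistakes$ belongs to the weighted-majority set that the lemma bounds by $2M$. The main subtlety I would need to be careful about is the interaction with $\aexp$ shrinking (via \Cref{ln:update3}) or being reset to $\Ecal$ between UpdatePreThreshold calls: in particular, an expert rejoining $\aexp$ after a hard reset may carry a stale $T_e^{pre}$ whose invariant has not been re-established. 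However, since $\mistakes$ itself does not change outside UpdatePreThreshold, and the lemma is only invoked at the end of a call where both invariants have just been verified against the \emph{current} $\aexp$, this does not affect the bound.
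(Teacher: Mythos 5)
Your proof is correct and takes essentially the same route as the paper: define $b(e,q)=\ind\{v_e(q)\geq T_e^{pre}\}$, observe that $\textrm{UpdatePreThreshold}$ enforces $\sum_{q\in\mistakes}b(e,q)\leq M$ for active experts while the removal test in \Cref{ln:uprem} guarantees every surviving $q$ is known by at least half of $\aexp$, and then invoke \Cref{lem:basemaj} exactly as in \Cref{lem:mem_bound_res}. You additionally spell out details the paper leaves implicit (persistence of the bound between calls, and restricting $b$ to $e\in\aexp$ to avoid relying on stale thresholds of inactive experts), but the underlying argument is the same.
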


\begin{proof}
The idea here closely follows the proof of \Cref{lem:mem_bound_res}. The only difference is that we define $b$ with respect $T_e^{pre}$, as $b(e, q) = 1$ if $v_e(q) \geq T_e^{pre}$ and $0$ otherwise. Since executing \text{UpdatePreThreshold} enforces that $\sum_{q \in mistakes} b(e, q) \leq M$ for all experts, the same argument follows.
\end{proof}

We now bound the total number of mistakes between updates.

\begin{lemma}[Mistake Bound Between Updates]\label{lem:double_counting_res} 
	Let $L^{(t)}$ be the number of mistakes made by the algorithm by time $t$. Consider times $t< t'$ such that $L^{(t)} + 6M < L^{(t')}$. Then $\aexp$ were updated i.e. \Cref{ln:aexpupd} was executed between time $t$ and $t'$. 
\end{lemma}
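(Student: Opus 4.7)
The plan is to mirror the proof of \Cref{lem:double_counting} step by step, carefully accounting for the modified definitions in \Cref{def:new_major_minor} and the fact that increments to the counters $E_e$ may now be delayed via \Cref{alg:update_pre_threshold}. I would assume for contradiction that $\aexp$ was not updated in $[t,t']$; then $|\aexp|$ is some constant $A$ throughout the interval and no hard reset occurs, so each $E_e$ is non-decreasing in that interval.

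First, I would split the $L^{(t')} - L^{(t)} > 6M$ mistakes in $[t,t']$ according to their classification at time $t'$. The mistakes that are still minor at time $t'$ form a subset of $\mistakes^{(t')}$, which has size at most $2M$ by \Cref{lem:minor_mistake_res}. Hence the number $R$ of mistakes in the interval that are classified as major by time $t'$ satisfies $R > 4M$.

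Second, I would carry out the double counting of pairs $(e,s)$ where $s \in [t,t']$ is a major mistake and $e \in \aexp$ is an expert whose $E_e$ was incremented because of that mistake, either immediately in \Cref{ln:uerror} of \Cref{alg:mwures} or later in \Cref{ln:u2error} of \Cref{alg:update_pre_threshold}. By \Cref{def:new_major_minor}, each major mistake triggers an increment in at least half of $\aexp$ at the moment of incrementation; since $\aexp$ is unchanged in $[t,t']$ this gives at least $A/2$ pairs per major mistake, and the total pair count is $\geq RA/2$. For the matching upper bound, let $X = |\bexp|$ at time $t'$; experts outside $\bexp$ have $E_e < M$ at time $t'$, and because $E_e$ is non-decreasing with no hard reset, each such expert contributes at most $M$ increments in $[t,t']$, while each of the $X$ experts in $\bexp$ contributes at most $R$. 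Hence $XR + (A-X)M \geq RA/2$, and combined with $R \geq 4M$ this yields $X \geq A/3$, contradicting the assumption that the update in \Cref{ln:update3} was not triggered.

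The main obstacle will be the delayed nature of promotion from minor to major. Some major mistakes are only recognized well after the adversary asks the question, namely when $\mistakes$ is processed inside \Cref{alg:update_pre_threshold}; one must verify that (i) every such promotion still involves at least $|\aexp|/2$ increments, which holds by the triggering condition of \Cref{ln:u2error}, and (ii) the ``at least half of $\aexp$'' fraction translates to $A/2$ in absolute terms, which holds precisely because we are assuming $\aexp$ does not change in $[t,t']$. The no-hard-reset aspect also leans on this assumption, since a hard reset can only occur when $\aexp$ becomes empty.
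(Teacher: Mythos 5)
Your proposal is correct and follows essentially the same route as the paper's proof: the same contradiction setup, the same double count of (expert, major-mistake) pairs yielding $XR + (A-X)M \geq AR/2$, and the same use of \Cref{lem:minor_mistake_res} to conclude $R \geq 4M$ because every interval mistake not residing in $\mistakes$ at time $t'$ must have been classified as major. Your explicit handling of the delayed minor-to-major promotions and of the constancy of $\aexp$ on $[t,t']$ just makes precise what the paper's argument already relies on.
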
 

\begin{proof}
This proof almost identically follows the proof of \Cref{lem:double_counting}. Assume towards a contradiction that $\aexp$ were \textbf{not} updated.
Let $A$ denote the number of $\aexp$ at time $t$, $R$ denote the number of major mistakes betwen $t$ and $t'$, and  $X$ denote the number of $\bexp$ at time $t'$ i.e. $\aexp$ with $E_e\geq M$ at time $t'$. Note that we are using \Cref{def:new_major_minor} for major and minor mistakes. 

Our strategy will be to double count the number of pairs $(e, q)$ where $q$ is a major mistake, and $e$ is an expert whose error counter is incremented on behalf of $q$ (i.e. in \Cref{ln:uerror} of \Cref{alg:mwures}, or \Cref{ln:u2error} of \Cref{alg:update_pre_threshold}.). Similar to \Cref{lem:double_counting}, this gives
$$XR + (A-X)M \geq AR/2.$$  

Given this equation, we finish the proof by applying the same reasoning as \Cref{lem:double_counting}. The only remaining argument is to prove that $R \geq 4M$. 

Let $q$ be an arbitrary question that our algorithm makes a mistake on. If at least half of all active experts also make a mistake (based on their estimated thresholds $T_e$), then $q$ is a major mistake (\Cref{ln:uerror} of \Cref{alg:mwures}) and is \textit{never} considered a minor mistake. If this does not occur, then $q$ is appended to $\mistakes$ (\Cref{ln:pre1} of \Cref{alg:update_pre_threshold}) and is consequently considered a minor mistake during \textit{all times it remains in $\mistakes$}. Finally, if $q$ is removed from $\mistakes$, then it is necessarily considered a major mistake again (\Cref{ln:u2error} of \Cref{alg:update_pre_threshold}). In summary, while the classification of $q$ can potentially change from minor mistake to major mistake, the key idea is that at all times, all mistakes have a precise classification as to whether they are major or minor.

Finally, since \Cref{lem:minor_mistake_res} implies that at all times the number of minor mistakes is at most $2M$, we must have that $R \geq 4M$ as all other mistakes must be classified as major. 


\end{proof}

By a direct adaptation of the proof of Theorem \ref{thm:upperboundunres}, we have the following corollary.

\begin{corollary}
Define a perceived error of an expert $e$ to be any instance in which $E_e$ is incremented by $1$. Let $\Wtilde{\error}$ denote the smallest perceived error of any expert. Then Algorithm \ref{alg:mwures} makes at most $6\Wtilde{\error}\lceil \log n \rceil + 6M \lceil \log N \rceil$ mistakes using at most $4M$ memory and $O(N)$ auxiliary state. 
\end{corollary}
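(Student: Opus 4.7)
My plan is to mirror the structure of the proof of \Cref{thm:upperboundunres}, relying on the three lemmas already established for \Cref{alg:mwures}: \Cref{lem:mem_bound_res} (which bounds $|\mem|$ by $2M$), \Cref{lem:minor_mistake_res} (which bounds $|\mistakes|$ by $2M$), and \Cref{lem:double_counting_res} (which says that every $6M$ mistakes between updates force $\aexp$ to shrink). For the memory bound, the total number of facts stored by the algorithm is $|\mem| + |\mistakes|$, and combining the first two lemmas gives $4M$. The auxiliary state consists of the per-expert scalars $E_e$, $T_e$, $T_e^{pre}$ together with the set $\aexp$, totalling $O(N)$.

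For the mistake bound, I will define a \emph{hard reset} at time $t$ as the event where $\aexp$ becomes empty and is reset to $\Ecal$ (with all $E_e$ zeroed out). Between consecutive hard resets at times $t < t'$, \Cref{lem:double_counting_res} implies that every additional $6M$ mistakes forces $\aexp$ to lose at least a $1/3$ fraction of its elements. Starting from $|\aexp| = N$, at most $\lceil \log N \rceil$ such contractions suffice to empty $\aexp$ and trigger a hard reset, so the algorithm makes at most $6M \lceil \log N \rceil$ mistakes between consecutive hard resets. Conversely, just before the hard reset at $t'$ every expert must have been placed into $\bexp$ at least once, which requires $E_e \geq M$ at that moment; since $E_e$ is reset to $0$ at time $t$ and only changes via increments (each of which counts as a perceived error by the corollary's definition), the perceived error count of every expert grows by at least $M$ between $t$ and $t'$. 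Hence $\Wtilde{\error}^{(t')} \geq \Wtilde{\error}^{(t)} + M$. Summing over the at most $\Wtilde{\error}^{(t)}/M + 1$ hard-reset intervals up to time $t$ then yields $L^{(t)} \leq 6\Wtilde{\error}^{(t)}\lceil \log N \rceil + 6M\lceil \log N \rceil$.

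The main subtlety I expect is ensuring the $E_e$ counters faithfully track perceived errors in the presence of deferred updates. In \Cref{alg:mwures}, $E_e$ is incremented either immediately in \Cref{ln:uerror} (when at least half of $\aexp$ is estimated to fail on $q^{(t)}$) or in a delayed manner via \Cref{ln:u2error} of \textrm{UpdatePreThreshold} (when a previously-minor mistake in $\mistakes$ is reclassified as major). One must verify that no increments are \say{lost}: each expert remains in $\aexp$ until its $E_e$ crosses $M$, and the hard reset is only triggered after every expert has been excluded via $\bexp$, so each expert's $E_e$ truly reaches $M$ between consecutive hard resets. Once this bookkeeping is pinned down, the rest of the argument is a direct translation of the proof of \Cref{thm:upperboundunres}.
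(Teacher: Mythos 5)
Your proposal is correct and follows essentially the same route as the paper: the same hard-reset decomposition, the same use of \Cref{lem:double_counting_res} to bound mistakes by $6M\lceil\log N\rceil$ per reset interval, the same growth of $\Wtilde{\error}$ by $M$ per interval, and the same memory accounting via \Cref{lem:mem_bound_res} and \Cref{lem:minor_mistake_res}. The only superfluous part is your worry about ``lost'' increments --- since a perceived error is \emph{defined} as an increment of $E_e$, the counters track perceived errors tautologically, and no further bookkeeping is needed.
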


\begin{proof}
This directly follows the proof of \Cref{thm:upperboundunres}. We use the same definition of \textbf{hard reset}. Like proof of \Cref{thm:upperboundunres},  if $t < t'$ are 2 consecutive hard resets, then \[L^{(t)} + 6M \ceil{\log N} \geq L^{(t')}\quad \text{and}\quad \Wtilde{\error^{(t')}} \geq  \Wtilde{\error^{(t)}} + M\, .\] Let $t$ be any time, and let $0 = t_0 < t_1 < \dots < t_r < t$ be all the hard resets smaller than $t$. Like \Cref{thm:upperboundunres}, we get  \[\Wtilde{\error^{(t)}} \geq rM\, , \quad \text{and} \quad L^{(t)} \leq 6rM\ceil{\log  N} + 6M \ceil{\log  N}\, .\] This proves the mistake bound. Moreover, the only additional memory this algorithm uses are: the set $\mistakes$, and the thresholds $T_e, T_e^{pre}$. Since there are $O(N)$ thresholds and since $|\mistakes| \leq 2M$ (\Cref{lem:minor_mistake_res}), the memory bound follows. 
\end{proof}

Finally, to prove Theorem \ref{thm:upperboundres}, we need to show that $\Wtilde{\error}$ is an underestimate of the true number of mistakes made by the best expert, $\error$. 

\begin{proof}[Proof of \Cref{thm:upperboundres}]
As we stated above, it suffices to show that $\Wtilde{\error} \leq \error$. For any expert $e$, let $T_e$ and $T_e^{pre}$ be the values of thresholds at time $t$ while executing \Cref{ln:uerror} or \Cref{ln:u2error}.  Define $T_e^{*}$ to be the true threshold of expert $e$ at that time, and $T_e^{pre, *}$ to be the true threshold of expert $e$ at time $s$ where $s$ was the last time that the set of active experts was updated by our algorithm. Note that such true thresholds must exist because all our experts are assumed to be value based experts (\Cref{defn:value_based}). 

We claim that $T_e \leq T_e^{*}$, and $T_e^{pre} \leq T_e^{pre, *}$. These claims finishes the proof as it implies that every increment to error counter $E_e$ corresponds to a instance in which expert $e$ actually made a mistake.

The core idea for both of these claims is that $T^*_e$ or $T_e^{pre, *}$ is $M$th largest value of a certain set of questions and every time we update either of $T_e$ or $T_e^{pre, *}$, we are updating them to the $M$th largest value of its subset. 

First, we prove $T_e \leq T_e^{*}$. This follows since, $T_e$ is updated in \Cref{ln:upup1} of \Cref{alg:update_threshold} to be the $M$th largest value of a \emph{subset} of observed questions and $T_e^{*}$ is defined to be the $M$th largest value of \emph{all} observed questions. 

The proof for $T_e^{pre}$ is slightly more involved. $T_e^{pre, *}$ is defined to be the $M$th largest value of \emph{all} observed questions before time $s$ where $s$ was the last time that the set of active experts was updated by our algorithm. To prove this we need to show that the set used in \Cref{ln:upup1}, $\mistakes$ is a subset of \emph{all} observed questions before time $s$ or equivalently $\mistakes$ does not contain any question that was \emph{first} streamed after time $s$. 

If a question $q$ is added to $\mistakes$, then two things must happen. First, at least half of all experts got the question correct according to the current estimated thresholds $T_e$. Second, our algorithm must get it wrong. The key observation is that if the question $q$ were streamed for the first time \textit{after} $s$, then our algorithm would have memorized it. This is because all maintained thresholds are non-decreasing. Thus, since our algorithm got it wrong, $q$ must have been streamed \textit{before} $s$.

\end{proof}

\subsection{Proof for Theorem \ref{thm:lowerbound}}
\label{sec:lowproof}
In this subsection, we provide the proof for our lower bound--\Cref{thm:lowerbound}. On a high level, the proof basically repeats the following simple strategy (for $M$ memory algorithms): Divide the set of $N$ experts into two groups which remember two different set of $M$ facts. Irrespective of what algorithm chooses to remember, we can always choose a set of questions to evaluate it on, such that algorithm makes $\approx M/2$ mistakes and half of the experts make $0$ mistake. Repeating this $\log(N)$ times (recursively on the set of experts which make $0$ mistakes) gives the required bound.

\begin{proof}[Proof of \Cref{thm:lowerbound}]
	We will divide our sequence of question answer pairs into two parts. We first discuss the first part where any algorithm will make at least $\floor{\log_{2c}N } \floor{M/2}$ mistakes and there exists an expert which makes $0$ mistakes. In the second part,  any algorithm will make at $\error$ mistakes and the best expert will make at most $\error$ mistakes.
	
	\paragraph{First part:} We will consider $\floor{\log_{2c} N}$ collections $C_k = \{(q_{k,j}, \Phi(q_{k,j}))\}_{j=1}^{2cM}$. Note that there are $2cM$ unique question answer pairs in  each collection. We will consider the following sequence: \[
		C_1, \Qcal_1, C_2, \Qcal_2, \ldots, C_{\floor{\log_{2c}N}}, \Qcal_{\floor{\log_{2c}N}}
	\] where each $\Qcal_i$ is a set of $M$ questions chosen by adversary that we will define later. We will show that for any algorithm $A$, there exists a choice of $\Qcal_i$'s such that the algorithm $A$ will make at least $\floor{M/2}$ mistakes on each $\Qcal_i$ and there exists a common expert which will make $0$ mistakes. Since, we do this for $i = 1$ to $\floor{\log_{2c}N}$, we get the desired result. 

	Next, we define our experts $\Ecal$ which we further divide into $2c$ groups $\Ecal_1, \ldots, \Ecal_{2c}$ each containing $\floor{N/2c}$ experts (we throw out the extra experts). Then, each $\Ecal_i$ is further divided into $2c$ groups $\Ecal_{i,1}, \ldots, \Ecal_{i, 2c}$. This tree like process is repeated $\floor{\log_{2c}N}$ times. Since, all the collections have distinct question answers and each expert (which is not thrown out) is in at least one of the leaf groups, we define the value function for expert $w$ in leaf group $\Ecal_{i_1, i_2,\ldots, i_{\floor{\log_{2c}N}}}$ for collection $C_k$ as \[
		v_e(q_{k,j}) = \begin{cases}
			k & \text{if}~ M(i_k-1) + 1 \leq j \leq Mi_k\\
			0 & \text{otherwise}
		\end{cases}
	\] Essentially, the experts in $\Ecal_{i_1, i_2,\ldots, i_{\floor{\log_{2c}N}}}$ when presented with collection $\{C_1, \ldots, C_k\}$ remembers only $M$ question answers from $C_k$, in particular $q_{k,j}$ for $j \in [M(i_k - 1), Mi_k]$.

	Since, the algorithm has only $cM$ memory, and there are in total $2cM$ question answers in each $C_k$, by pigeonhole principle there exists an $i_k \in [2c]$ such that $A$ remembers less than $\floor{M/2}$ question answers from $\{(q_{k,j}, \Phi(q_{k,j}))\}_{j= M (i_k - 1)}^{M i_k}$. Therefore, if we set $\Qcal_k = \{(q_{k,j}, \Phi(q_{k,j}))\}_{j= M (i_k - 1)}^{M i_k}$, we get that the algorithm $A$ makes at least $\floor{\log_{2c}N } \floor{M/2}$ mistakes. Also, by our setup, the experts in leaf group $\Ecal_{i_1, \ldots, i_{\floor{\log_{2c}N}}}$ will make $0$ mistakes. This proves the claim.

	\paragraph{Second part:} We consider the following sequence:
	\[
		(q_1,\Phi(q_1)), \ldots, (q_{cM+1},\Phi(q_{cM + 1})), q
	\] where unique question answer pairs $(q_i, \Phi(q_i))$ are shown  and then adversary chooses question $q$ to evaluate the experts and algorithm $A$. Note that by pigeonhole principle, there exists a question answer pair $(q_i, \Phi(q_i))$ such that the algorithm has not stored the answer for question $q_i$. Choosing $q = q_i$, the algorithm will make $1$ mistake and any expert will make at most $1$ mistake. Repeating this $\error$ times proves our claim.
\end{proof}
\newpage
\bibliographystyle{plainnat}
\bibliography{main}
\end{document}